\documentclass[11pt]{article}
\usepackage{bbm}
\usepackage{eqnarray,amsmath,amsfonts,amsthm,mathrsfs}
\usepackage{color}
\usepackage{bm}
\usepackage{amssymb}
\usepackage[dvips]{graphicx}
\usepackage{epsfig}
\usepackage{epsf}
\usepackage{float}
\usepackage{subfigure}
\usepackage{amsfonts,amsmath,amsthm,amssymb,graphicx,float,fancyhdr,multirow,hyperref}
\usepackage{color,booktabs,longtable,authblk}
\usepackage{mathrsfs,hhline}
\usepackage{algorithm}
\usepackage{algorithmic}

\oddsidemargin .5in \evensidemargin  1in
\usepackage{fancyhdr}
\usepackage[top=2.5cm, bottom=2.5cm, left=3cm, right=3cm]{geometry}
\setlength{\parskip}{0.10in}
\usepackage{graphicx}
\newtheorem{theorem}{Theorem}
\newtheorem{assumption}{Assumption}
\newtheorem{corollary}{Corollary}
\newtheorem{definition}{Definition}

\newtheorem{lemma}{Lemma}[section]
\newtheorem{proposition}{Proposition}[section]
\newtheorem{remark}{Remark}[section]
\newtheorem{example}{Example}[section]

\allowdisplaybreaks[4]


\def\begeqn{\begin{equation}}
\def\endeqn{\end{equation}}
\def\begth{\begin{theorem}}
\def\endth{\end{theorem}}
\def\begprop{\begin{proposition}}
\def\endprop{\end{proposition}}
\def\begcor{\begin{corollary}}
\def\endcor{\end{corollary}}
\def\begdef{\begin{definition}}
\def\enddef{\end{definition}}
\def\beglemm{\begin{lemma}}
\def\endlemm{\end{lemma}}
\def\begexm{\begin{example}}
\def\endexm{\end{example}}
\def\begrem{\begin{remark}}
\def\endrem{\end{remark}}
\def\begassum{\begin{assumption}}
\def\endassum{\end{assumption}}

\def\si{\sigma}

\def\B{\mathcal{B}}

\def\O{\mathcal{O}}

\def\N{\mathbb{N}}
\def\R{\mathbb{R}}

\def\X{\mathcal{X}}
\def\Y{\mathcal{Y}}
\def\Z{\mathcal{Z}}
\def\E{\mathcal{E}}

\def\bE{\mathbb{E}}

\def\H{\mathcal{H}}

\def\B{\mathcal{B}}

\def\R{\mathbb{R}}

\def\RR{{\mathbb R}}

\title{Optimality of Robust Online Learning
$^\dag$\footnotetext{\dag~The corresponding author is Lei Shi.}}
\author{Zheng-Chu Guo$^1$, Andreas Christmann$^2$,  and Lei Shi$^3$\\
\small $^1$ School of Mathematical Sciences, Zhejiang University, Hangzhou 310058, P. R. China \\
Email: guozhengchu@zju.edu.cn\\
\small $^2$ Department of Mathematics, University of Bayreuth, Bayreuth 95447, Germany\\
Email: andreas.christmann@uni-bayreuth.de\\
\small $^3$  School of Mathematical Sciences and Shanghai Key Laboratory for Contemporary\\
\small Applied Mathematics, Fudan University, Shanghai 200433, P. R. China\\
Email: leishi@fudan.edu.cn}
\date{}
\begin{document}

\maketitle
\begin{abstract}
In this paper, we study an online learning algorithm with a robust loss function $\mathcal{L}_{\sigma}$ for regression over a reproducing kernel Hilbert space (RKHS). The loss function $\mathcal{L}_{\sigma}$ involving a scaling parameter $\sigma>0$ can cover a wide range of commonly used robust losses. The proposed algorithm is then a robust alternative for online least squares regression aiming to estimate the conditional mean function. For properly chosen $\sigma$ and step size, we show that the last iterate of this online algorithm can achieve optimal capacity independent convergence in the mean square distance. Moreover, if additional information on the underlying function space is known, we also establish optimal capacity dependent rates for strong convergence in RKHS. To the best of our knowledge,  both of the two results are new to the existing literature of online learning.
\end{abstract}

{\bf Keywords and Phrases:} Online learning, Robust regression, Convergence analysis, Reproducing kernel Hilbert space

{\bf Mathematics Subject Classification:} 68T05, 62J02, 68Q32, 62L20

\section{Introduction}

Online learning is one of the most popular approaches to handle large-scale datasets due to its low computational complexity and low storage requirements. Although great success achieved in batch learning, it becomes numerically intractable when the dataset is extremely large, as solving the optimization problem usually scales between quadratic and cubic complexity in the sample size. Instead of processing the entire training data in a batch, online learning can lead to prominent computational speed-up by tackling the data one by one. Recently, the scenario of online learning has attracted  tremendous interest and attention due to its successful applications in various fields \cite{Bottou18,Dieuleveut&Bach2016,sutskever2013importance,Ying2008,Zhang04}.

In this paper, we consider an online learning algorithm for robust kernel regression. As a non-parametric method developed during the last three decades, kernel regression in a reproducing kernel Hilbert space (RKHS) has a wide range of applications from machine learning to statistical inverse problems \cite{bauer2007regularization,Blanchard2016, Caponnetto2007,de2010adaptive,guo2017learning,lu2020,raskutti2014early}. Let $\rho$ be a Borel probability distribution on $\mathcal{X}\times\mathcal{Y}$, where $\mathcal{X}$ is an arbitrary, non-empty set equipped with $\sigma-$algebra and $\mathcal{Y}\subseteq \mathbb{R}$. The goal of non-parametric regression is to infer a functional relation between the explanatory variable $X$ that takes values in $\mathcal{X}$ and the response variable $Y\in \mathcal{Y}$, under the assumption that $\rho$ is the joint distribution of $(X,Y)$ but completely unknown. In most applications of regression analysis, the underlying functional relation of great importance is the conditional mean of $Y$ given $X=x$, namely the regression function. Denote by $\rho (y|x)$ the conditional distribution of $\rho$ for given $x\in \mathcal{X}$. The regression function is defined by
$$f_{\rho}(x)=\int_\X yd\rho(y|x), \quad \forall x \in \mathcal{X}.$$ Though $\rho$ is unknown, we have a sequence of samples $\{(x_t,y_t)\}_{t \in \N}$ independently distributed from $\rho$ instead. One typical way to estimate $f_{\rho}$ is empirical risk minimization in which an empirical error associated with the least squares loss is minimized at the given samples. However, from a robustness point of view, the least squares loss is not a good choice for regression, as it is not Lipschitz continuous, and thus the generated estimator can be dramatically affected by the smallest amount of outliers \cite{de2009robustness,debruyne2010robustness}. One of the main strategies to improve robustness is to replace the least squares loss by some robust alternatives, i.e., loss functions with bounded first derivatives. In this paper, we consider utilizing the loss function
\begin{equation}\label{lossfunction}
\mathcal{L}_\sigma(u)= W\left(\frac{u^2}{\sigma^2}\right)
\end{equation} to estimate $f_{\rho}$. Here $W:\R_+\mapsto \R$ is a windowing function and $\sigma>0$ is a scaling parameter. Moreover, the windowing function $W$ is required to satisfy the following two conditions:
\begin{equation}\label{condition1}
W_+'(0)>0,\  W'(s)>0 ~{\rm for}~s>0, \ C_W:=\sup_{s\in(0,\infty)}\{|W'(s)|\}<\infty,
\end{equation}
and there exist constants $p>0$ and $c_p>0$ such that
\begin{equation}\label{condition2}
|W'(s)-W'_+(0)|\le c_p |s|^p, \quad \forall s>0,
\end{equation} where $W'_+(0)$ denotes the right derivative of $W(x)$ at $x=0$.

By choosing different windowing functions, loss function of form (\ref{lossfunction}) can include a large variety of commonly used robust losses. We give some examples as follows, where $\mathbb{I}_{{A}}$ denote the indicator function of the set ${A}$.
\begin{itemize}
\item Fair loss \cite{Fair1974}: $\mathcal{L}_\sigma(u)=\frac{|u|}{\sigma}-\log\left(1+\frac{|u|}{\sigma}\right), W(s)=\sqrt{s}-\log(1+\sqrt{s}), p=\frac12,c_p=\frac12.$
\item Cauchy (aka. Lorentzian) loss \cite{black1996robust}: $\mathcal{L}_\sigma(u)=\log\left(1+\frac{u^2}{2\sigma^2}\right), W(s)=\log(1+\frac{s}{2}), p=1, c_p=\frac{1}{4}.$
\item Welsch loss \cite{Holland&Welsch1977}: $\mathcal{L}_\sigma(u)=1-\exp(-\frac{u^2}{2\sigma^2}), W(s)=1-\exp\left(-\frac{s}{2}\right), p=1, c_p=\frac14.$
\item Geman-McClure loss \cite{Ganan1985}: $\mathcal{L}_\sigma(u)=\frac{\sigma^2}{\sigma^2+t^2}, W(s)=\frac{1}{1+s}, p=1, c_p=2.$
\item Tukey's biweight loss \cite{HampelRonchettiRousseeuwStahel1986}: $\mathcal{L}_\sigma(u)=\frac{c^2}{6}\left(1-\left(1-\frac{u^2}{\sigma^2}\right)^3 {\mathbb{I}}_{\{|u|\le\sigma\}}\right), W(s)=\frac{c^2}{6}\big(1-(1-s)^3 {\mathbb{I}}_{\{s\le1\}}\big), p=1, c_p=c^2.$
\end{itemize}

Robust losses have been extensively studied in parametric regression, which leads to the development of robust statistics \cite{huber1981robust}. All the concrete examples of $\mathcal{L}_{\sigma}$ loss listed above were initially proposed in robust statistics to build robust estimator of linear regression. It should be pointed out that robust $\mathcal{L}_{\sigma}$ loss satisfying condition \eqref{condition1} and \eqref{condition2} could be non-convex, e.g.,  Cauchy and Welsh loss.  Empirical and theoretical studies show that non-convex $\ell_{\sigma}$ loss can lead to more robust estimators, compared with their convex counterparts, e.g., Huber's loss and Fair loss \cite{maronna2006robust,mizera2002breakdown}. This is mainly due to the redescending property, which can be illustrated by taking Welsh loss as an example. The roots of the second derivative of Welsh loss is $\pm \sigma$, which tells us at what value of $u$ the loss begins to redescend. When $|u|\leq \sigma$, Welsh loss is convex and behaves as the least squares loss; when $|u|> \sigma $ the loss function becomes concave and rapidly tends to be flat as $|u|\to \infty$. Therefore, with a suitable chosen scale parameter $\sigma$, Welsh loss can completely reject gross outliers while keeping a similar prediction accuracy as that of least squares loss, which makes it a more efficient robust loss. Recently, non-convex $\mathcal{L}_{\sigma}$ loss such as Welsh loss has drawn much attention in the signal processing community and shown its efficiency in non-parametric regression \cite{Feng2015,FengWu2021,HuangFegngWu2022,liu2007correntropy,lv2021,santamaria2006generalized}.

To construct robust non-parametric estimators of $f_{\rho}$, we propose an online learning algorithm to minimize the empirical error
$\frac{1}{T}\sum_{t=1}^T \mathcal{L}_{\sigma}(f(x_t)-y_t)$ for $T\in \mathbb{N}$ in an RKHS $\H_K$. The function space $\H_K$ is uniquely determined by a symmetric and positive semi-definite kernel $K:{\cal X} \times {\cal X} \to \mathbb{R}$ \cite{Aron}. Let $K_x: {\cal X} \to \mathbb{R}$ be the function defined by $K_x(s)=K(x,s)$ for $x,s\in {\cal X}$ and denote by  $\langle \cdot, \cdot \rangle_K$ the inner product of $\H_K$. Then $K_x \in \H_K$ and the reproducing property
\begin{equation}\label{reproducingproperty}
f(x)= \langle f, K_x \rangle_K
\end{equation} holds for all $x\in {\cal X}$ and $f\in {\cal H}_K$. The online algorithm considered in this work adopts a single-pass, fixed step-size stochastic gradient descent scenario in ${\cal H}_K$. Instead of computing the gradient of the empirical error with respect to the entire training set, the online learning algorithm only computes the gradient term of one sample randomly at each step. Given $z_t=(x_t,y_t)\in \mathcal{X}\times\mathcal{Y}$, the local error $\mathcal{L}_{\sigma}(f(x_t)-y_t)$ of $f\in \mathcal{H}_K$ at the sample $z_t$ can be regarded as a functional on $\mathcal{H}_K$. Due to the reproducing property of \eqref{reproducingproperty}, the gradient of $\mathcal{L}_{\sigma}(f(x_t)-y_t)$ at $f\in \mathcal{H}_K$ is $\mathcal{L}'_{\sigma}(f(x_t)-y_t)K_{x_t}$. Then the online learning algorithm is explicitly given by the following definition.
\begin{definition}\label{robustonlinealgorithm}
	Let $\{z_t=(x_t,y_t)\}_{t\in\N}$ be a sequence of random samples independently distributed according to $\rho.$ The online learning algorithm with the loss function  $\mathcal{L}_{\sigma}$ in \eqref{lossfunction} is defined by $f_1=0$, and
	\begin{align}\label{algorithm}
	f_{t+1}=f_t-\eta \mathcal{L}'_{\sigma}(f(x_t)-y_t)K_{x_t}=f_t-\eta W'\left(\xi_{t,\sigma}\right)(f_t(x_t)-y_t)K_{x_t}, \quad t\in \mathbb{N},
	\end{align}
	where $\eta>0$ is the step size and $\xi_{t,\sigma}=\frac{(y_t-f_t(x_t))^2}{\si^2}.$
\end{definition}

In kernel regression, the previous studies are mainly based on convex risk minimization to build robust estimators, where the empirical error with a convex robust loss adding a regularization term is minimized in some infinite-dimensional RKHS. The typical examples include support vector machines with Huber's loss, logistic loss, absolute value loss and its asymmetric variant known as the pinball loss \cite{ChristmannVanMessemSteinwart2009,Christmann2007}. In general, these surrogate robust losses for least squares loss can not be used to estimate the regression function unless the conditional distributions of $Y|X=x$ are known to be symmetric \cite{Steinwart2007}. However, as a basic estimator in data analysis, regression function is used in many situations for forecasting, modeling and analysis of trends, which is of most interest to us. On the other hand, computation of robust estimates is much more computationally intensive than least squares estimations, especially in large-scale data analysis. The online learning algorithm in Definition \ref{robustonlinealgorithm} can provide a robust estimator of regression function for large-scale data analysis. The computational cost of this algorithm is $\mathcal{O}(T^2)$ when the sample size is $T$, that is, the algorithm terminates after $T$ iterations with final output $f_{T+1}$. At each iteration, the main computational cost is due to the evaluation of $f_t(x_t)$ which needs to calculate $K(x_i, x_t)$ for $i$ from 1 to $t$. If one can compute and store all $\{K(x_i,x_j)\}_{i,j=1}^T$ in advance, the computational cost can be reduced to linear $\mathcal{O}(T)$ at the requirement of large memory and fast memory access.

In this paper, we aim to evaluate the optimality of robust online learning algorithm in Definition \ref{robustonlinealgorithm}. Under the framework of non-parametric regression, the stochastic optimality of online learning and its variants have been extensively studied in a vast literature, see e.g., \cite{Dieuleveut&Bach2016,Lin2016,yao2010complexity,Ying2008,ying2017unregularized}. However, the developed theoretical analysis only focuses on least squares loss or other convex losses. When $\mathcal{L}_{\sigma}$ is a non-convex loss function, the iteration sequences produced by online learning algorithm are often trapped at stationary points of the objective function, which brings essential difficulties to the mathematical analysis. As far as we know, no optimality analysis so far have been given to support the efficiency of online learning with robust $\mathcal{L}_{\sigma}$ loss of form (\ref{lossfunction}). In this work, we show that with an appropriately chosen scale parameter $\sigma$, the iteration sequences of constant step-size robust online learning can approximate the regression function $f_{\rho}$. The approximation accuracy is measured by the rates of convergence in the standard mean square distance and the strong RKHS norm. We shall establish a novel convergence analysis by fully exploiting the properties of the loss function $\mathcal{L}_{\sigma}$ and the structure of the underlying RKHS. Our analysis of the convergence is tight as the derived upper bounds on the performance of robust online learning almost match the minimax lower bounds in batch least squares regression learning. Especially, we obtain the capacity dependent optimal rates of strong convergence in the sense of RKHS norm. This kind of convergence is very important but seldom considered in previous studies of online learning.

The rest of this paper will be organized as follows. We present main results in Section \ref{section: main results}. Discussions and comparisons with related work are given in Section \ref{section: related work}.  Section \ref{section: error decomposition} establishes an error decomposition of the algorithm (\ref{algorithm}) as well as some basic estimates which is useful for our convergence analysis. The proofs of main results are given in Section \ref{section: proof of main results}.

\section{Main Results}\label{section: main results}

Before presenting main results, we first introduce some notations and assumptions. In our setting, the input space $\cal X$ is a general measurable space with the $\sigma-$algebra $\cal{A}$ and $K:\cal{X} \times \cal{X} \to \mathbb{R}$ is a symmetric and positive semi-definite kernel. We suppose that the kernel $K(\cdot,\cdot)$ is measurable on ${\cal X} \times {\cal X}$ for the product $\sigma-$algebra $\cal{A} \otimes \cal{A}$, and $\kappa:=\sup_{x\in \cal X}\sqrt{K(x,x)}<\infty$. Therefore, the underlying RKHS ${\cal H}_K$ consists of bounded measurable real-valued functions on $\cal{X}$ and the function $x \to K(x,x)$ is measurable on $({\cal X},{\cal A})$ (see, for instance, \cite{Steinwart2008}). Let $\rho_{\cal X}$ be the marginal distribution of $\rho$ on ${\cal X}$ and $L^2_{\rho_{\cal X}}$ be the Hilbert space of square-integrable functions with respect to $\rho_{\cal X}$. Denote by $\|\cdot\|_{\rho}$ the norm in the space $L^2_{\rho_{\cal X}}$  induced by the inner product $\langle f, g\rangle_{\rho}=\int_{\cal X} f(x) g(x) d\rho_{\cal X}(x)$. Since $\int_{{\cal X}} K(x,x) d\rho_{\cal X}(x)\leq \kappa^2$, i.e., $K$ is integrable on the diagonal, the RKHS ${\cal H}_K$ is compactly embedded into $L^2_{\rho_{\cal X}}$. Then the integral operator $L_K: L^2_{\rho_{\cal X}} \to L^2_{\rho_{\cal X}}$, given by, for $f\in L^2_{\rho_{\cal X}}$ and $u\in \cal X$,
\begin{equation*}
L_K(f)(u)=\int_{\cal X} f(x)K(x,u)d\rho_{\cal X}(x),
\end{equation*} is a compact, self-adjoint, and positive operator on $L^2_{\rho_{\cal X}}$. Due to the Spectral theorem, there exists in $L^2_{\rho_{\cal X}}$ an orthonormal basis $\{\phi_k\}_{k \geq 1}$ consisting of eigenfunctions of $L_K$, and the corresponding eigenvalues $\{\lambda_k\}_{k \geq 1}$ (repeated according to their algebraic multiplicity) are nonnegative. Then we can define the $r-$th power of $L_K$ on $L^2_{\rho_{\X}}$ by $L^r_K(\sum_{k \geq 1} c_k \phi_k)=\sum_{k \geq 1} c_k \sigma^r_k \phi_k$ with $r>0$ and $\{c_k\}_{k\geq 1} \in \ell^2(\mathbb{R})$ (i.e., $\sum_{k\geq 1}c^2_k<\infty$). In particular, $L^{1/2}_K$ is an isomorphism from $\overline{{\cal H}_K}$, the closure of ${\cal H}_K$ in $L^2_{\rho_{\cal X}}$, to ${\cal H}_K$, i.e., for each $f\in \overline{{\cal H}_K}$, $L^{1/2}_K f \in {\cal H}_K$ and
\begin{equation}\label{normrelation2}
\|f\|_{\rho}=\|L^{1/2}_K f\|_K.
\end{equation} Moreover, for all $f\in L^2_{\rho_{\cal X}}$, we have $L_K f \in {\cal H}_K$. In view of the above discussions, the operator $L_K$ can also be interpreted as an operator on ${\cal H}_K$.
For simplicity, whether it is viewed as an operator on $L^2_{\rho_{\cal X}}$ or ${\cal H}_K$, we will keep the same notation. In both cases, $L_K$ is a nuclear operator under our assumptions.

Our error analysis is based on the following regularity condition on the target function $f_\rho$, which is classical in the literature of kernel regression \cite{Caponnetto2007}.
\begin{assumption}\label{assumption1}
\begin{equation}\label{regularity condition}
f_\rho= L_K^r g_\rho \quad \mbox{with $r>0$ and $g_\rho\in L_{\rho_{\X}}^2$}.
\end{equation}
\end{assumption} This assumption implies that $f_{\rho}$ belongs to the range space of $L^r_K$ expressed as
$$L^r_{K}(L^2_{\rho_{\X}})=\left\{f \in L^2_{\rho_{\X}}: \sum_{k \geq 1} \frac{\langle f, \phi_k\rangle^2_{\rho}}{\lambda_k^{2r}}<\infty \right\}.$$ Then $L^{r_1}_{K}(L^2_{\rho_\X}) \subseteq L^{r_2}_{K}( L^2_{\rho_\X})$ whenever $r_1 \geq r_2$. The regularity of $f_{\rho}$ is measured by the decay rate of its expansion coefficients in terms of $\{\phi_k\}_{k \geq 1}$. Condition (\ref{regularity condition}) means that $\langle f, \phi_k\rangle^2_{\rho_X}$ decays faster than the $2r-$th power of the eigenvalues of $L_K$. Apparently, larger parameters $r$ will result in faster decay rates, and thus indicate higher regularities of $f_{\rho}$. This assumption is standard in the literature of learning theory, and can be further interpreted by the theory of interpolation spaces \cite{Smale2003}.

Throughout the paper, we assume that the output $y$ is uniformly bounded, i.e., for some constant $M>0$, $|y|\le M$ almost surely. Recall that the sample $\{z_t=(x_t,y_t)\}_{t\in\N}$ is drawn independently from the probability distribution $\rho$. For $k\in\N,$ let $\bE_{z_1,\cdots,z_k}$ denote taking expectation with respect to $z_1,\cdots,z_k$, which is written as $\bE_{Z^k}$ for short. Our first main result is related to the convergence in the mean square distance, which establishes upper bounds of $L^2_{\rho_\X}$ norm in expectation.

\begin{theorem}\label{thm: convergence rate in L2}
Let $\{f_t\}_{t=1}^{T+1}$ be defined by algorithm (\ref{algorithm}) with a windowing function $W$. Assume that $W$ satisfies \eqref{condition1} and \eqref{condition2} with some $p>0$ and the regularity condition (\ref{regularity condition}) holds with $r>0$. Choose the step size $\eta=\frac{1}{\eta_{0}}T^{\frac{-2r}{2r+1}}$ with
\begin{align}\label{condition for eta0}
\eta_0\ge \max\left\{{C_W}\kappa^2,{ \left(\frac{1}{e}+2\kappa^2 W_+'(0)\right)^2}\right\}.
\end{align}
Then
\begin{align*}
\bE_{Z^T}\left[\|f_{T+1}-f_\rho\|_\rho^2\right]\le C \max\left\{T^{-\frac{2r}{2r+1}}\log T, T^{\frac{2p+2}{2r+1}}\sigma^{-4p}\right\},
\end{align*}
where $C_W:=\sup_{s\in(0,\infty)}\{|W'(s)|\}$ and the constant $C$ is independent of $T$ and will be given explicitly in the proof.
\end{theorem}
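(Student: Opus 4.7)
My approach is to treat the robust-loss update as a stochastic perturbation of the ideal least-squares SGD whose effective step size is $\eta W'_+(0)$. Setting $g_t := f_t - f_\rho$ and $\F_{t-1}:=\sigma(z_1,\ldots,z_{t-1})$, and decomposing $W'(\xi_{t,\sigma})=W'_+(0)+[W'(\xi_{t,\sigma})-W'_+(0)]$ in (\ref{algorithm}), I first take conditional expectation to extract the drift and write
\begin{equation*}
g_{t+1}=(I-\eta W'_+(0) L_K)\,g_t-\eta B_t-\eta \mathcal{M}_t,
\end{equation*}
where $B_t:=\bE[(W'(\xi_{t,\sigma})-W'_+(0))(f_t(x_t)-y_t)K_{x_t}\mid \F_{t-1}]$ is the deterministic \emph{robust bias} and $\mathcal{M}_t$ is a martingale difference. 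Unrolling from $g_1=-f_\rho$ gives
\begin{equation*}
g_{T+1}=-(I-\eta W'_+(0)L_K)^T f_\rho-\eta\sum_{t=1}^{T}(I-\eta W'_+(0)L_K)^{T-t}(B_t+\mathcal{M}_t),
\end{equation*}
and I would bound the three contributions separately in $L^2_{\rho_\X}$.

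The initialization piece $\|(I-\eta W'_+(0)L_K)^T f_\rho\|_\rho^2$ is immediate from Assumption \ref{assumption1}: writing $f_\rho=L_K^r g_\rho$ and using the spectral estimate $\sup_{\lambda\ge 0}(1-\eta W'_+(0)\lambda)^T\lambda^r\le C_r(\eta W'_+(0)T)^{-r}$ produces $O((\eta W'_+(0)T)^{-2r})=O(T^{-2r/(2r+1)})$. For the martingale piece I rely on orthogonality in expectation together with the identity $\|(I-\eta W'_+(0)L_K)^{k}v\|_\rho^2=\langle L_K(I-\eta W'_+(0)L_K)^{2k}v,v\rangle_K$ (valid for $v\in\H_K$) and the spectral bound $\|L_K(I-\eta W'_+(0)L_K)^{2k}\|_{\mathrm{op}}\le [e\,\eta W'_+(0)(2k+1)]^{-1}$; this is precisely where the $1/e$ in (\ref{condition for eta0}) enters. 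Summing $\sum_{t=1}^T 1/(T-t+1)$ generates the $\log T$ factor, and, combined with an appropriate bound on $\bE\|\mathcal{M}_t\|_K^2$ supplied by the moment estimates below, this yields the $T^{-2r/(2r+1)}\log T$ term.

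The main obstacle is the robust-bias term $\eta\sum_{t=1}^T(I-\eta W'_+(0)L_K)^{T-t}B_t$. Condition (\ref{condition2}) produces the pointwise estimate
\begin{equation*}
\|B_t\|_K\le c_p\kappa\sigma^{-2p}\bE\bigl[|y-f_t(x)|^{2p+1}\mid\F_{t-1}\bigr],
\end{equation*}
reducing everything to a priori moment bounds on $\|f_t\|_K$ (equivalently, on the $(2p+1)$-st moment of $y-f_t(x)$)---and these same moments control $\bE\|\mathcal{M}_t\|_K^2$ in the martingale step. A naive iteration using only $|W'|\le C_W$ gives an exponential-in-$T$ blow-up and is useless; instead I plan an inductive argument on $\bE[\|f_t\|_K^{q}]$ that exploits the contractive drift supplied by $-\eta W'_+(0) L_K g_t$ together with the elementary inequality $\sup_{\lambda\ge 0}\lambda e^{-c\lambda}\le 1/(ec)$. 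The restriction $\eta_0\ge (1/e+2\kappa^2 W'_+(0))^2$ is exactly what makes this recursion stable and yields a moment bound of order at most $T^{p/(2r+1)}$, uniformly in $t\le T$. Plugging this bound back through $\|\cdot\|_\rho\le\|L_K\|_{\mathrm{op}}^{1/2}\|\cdot\|_K$ and $\|(I-\eta W'_+(0)L_K)^k\|_{\mathrm{op},L^2_\rho}\le 1$ delivers the $T^{(2p+2)/(2r+1)}\sigma^{-4p}$ contribution. Combining the three bounds by the triangle inequality and taking the maximum completes the proof.
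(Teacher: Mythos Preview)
Your decomposition and the treatment of the initialization and martingale pieces are essentially the paper's Proposition~\ref{prop: error decomposition in L2} (the paper keeps the full robust correction $E_{t,\sigma}$ together rather than splitting off its conditional mean, but this is cosmetic). The spectral estimate and the $\sum_{t}1/(T-t+1)\asymp\log T$ computation are also the same.

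Where your plan goes wrong is in the handling of the a priori bounds. First, the claim that ``a naive iteration using only $|W'|\le C_W$ gives an exponential-in-$T$ blow-up'' is false: expanding $\|f_{t+1}\|_K^2=\|f_t\|_K^2-2\eta W'(\xi_{t,\sigma})(f_t(x_t)-y_t)f_t(x_t)+\eta^2\|H_t\|_K^2$ and completing the square, the condition $\eta\le 1/(\kappa^2 C_W)$ alone yields the deterministic pathwise bound $\|f_t\|_K^2\le M^2C_W\eta t$ (Proposition~\ref{prop: bound for ft}). This immediately controls $\|E_{t,\sigma}\|_K$ and hence the robust-bias term; no moment induction or contractive drift is needed here, and indeed a drift argument in $\|\cdot\|_K$ would not close because $L_K$ provides no uniform contraction in $\H_K$.

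Second, and more seriously, you attribute the restriction $\eta_0\ge(1/e+2\kappa^2W'_+(0))^2$ to the moment bound on $\|f_t\|_K$, but that is not where it is used. The martingale variance term requires a \emph{uniform} bound on $\bE_{Z^{t-1}}[\E(f_t)]=\E(f_\rho)+\bE_{Z^{t-1}}\|f_t-f_\rho\|_\rho^2$, not on $\bE\|f_t\|_K^q$; the pathwise bound only gives $\E(f_t)=O(\eta T)=O(T^{1/(2r+1)})$, which would insert an extra $T^{1/(2r+1)}$ factor into the variance and destroy the rate. The paper obtains the needed uniform bound by a bootstrapping induction on $\bE_{Z^{t-1}}[\E(f_t)]$ (Proposition~\ref{prop: bound for E(ft)}): one feeds the error decomposition for $\bE\|f_{t+1}-f_\rho\|_\rho^2$ back into itself, and the step-size condition $\eta\le[(1/e+2\kappa^2W'_+(0))^2\log T]^{-1}$ is precisely what makes the recursion self-consistent with a constant (plus $\sigma^{-4p}$) right-hand side. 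Your proposal does not contain this step, and without it the martingale bound does not deliver $T^{-2r/(2r+1)}\log T$.
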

Especially, let $\sigma\ge T^\frac{r+p+1}{2p(2r+1)},$ the rate of convergence is at least $\mathcal{O}\left(T^{-\frac{2r}{2r+1}}\log T\right)$, which is almost  minimax optimal (up to a logarithmic term) due to the discussion in \cite{Ying2008}. The bound presented in Theorem \ref{thm: convergence rate in L2} is called capacity independent as we can establish it without requiring further assumptions except for the boundedness of kernel functions.

Moreover, if we know some additional information about the kernel $K$, or the capacity of the hypothesis space $\H_K$, we can establish optimal capacity dependent convergence rates in $\H_K$ in the minimax sense. As pointed out in \cite{Smale2007learning}, the convergence in ${\cal H}_K$ implies the convergence in $C^n (\cal{X})$ if $K\in C^{2n}({\cal X}\times {\cal X})$. Here $n\in \mathbb{N}$ and $C^n(\cal{X})$ is the space of all functions on ${\cal X}\subset\RR^d$ whose partial derivatives up to order $n$ are continuous with $\|f\|_{C^n(\cal{X})}=\sum_{|s|\le n}\|D^s f\|_\infty.$ So the convergence in ${\cal H}_K$ is much stronger, which ensures that the estimators can not only approximate the regression function itself, but also approximate its derivatives. In this paper, we use the following condition to measure the capacity of the hypothesis space $\H_K$.

\begin{assumption}\label{assumption2}
For $0<\beta<1,$ we assume
\begin{equation}\label{capacity condition}
{\rm Tr}(L_K^\beta)<\infty.
\end{equation}
Here ${\rm Tr}(A)$ denotes the trace of the operator $A.$
\end{assumption}
The definition of $L_K^\beta$ gives that ${\rm Tr}(L_K^\beta)=\sum_{k\ge 1}\lambda_k^\beta$, here $\{\lambda_k\}_{k\in\N}$ are the eigenvalues of $L_K$. Since $L_K$ is a trace class operator satisfying ${\rm Tr}(L_K)=\sum_{k\geq 1} \lambda_k = \int_\X K(x,x)d\rho_\X (x) \leq \kappa^2$,  the capacity assumption (\ref{capacity condition}) holds trivially with $\beta=1$. Thus the case of $\beta=1$ corresponds to the capacity independent case as we don't require any complexity
measures of the underlying function space. Capacity assumption (\ref{capacity condition}) incorporates the information of marginal distribution $\rho_\X$, which is a tighter measurement for the complexity of the RKHS than more classical covering, or entropy number assumptions \cite{CZ2007,Steinwart2008}. This assumption is essentially an eigenvalue decaying condition imposed on the operator $L_K$. In fact, if ${\rm Tr}(L_K^\beta)<\infty,$ since the eigenvalues $\{\lambda_k\}_{k\in \mathbb{N}}$ are sorted in a decreasing order, then for any $k\ge 1,$ we have
$$k\lambda_k^\beta\le \sum_{j=1}^k\lambda_j^\beta\le \sum_{j\ge 1}\lambda_j^\beta={\rm Tr}(L_K^\beta)<\infty.$$ It follows that $\lambda_k\le k^{-1/\beta}({\rm Tr}(L_K^\beta))^{1/\beta}, \forall k \geq 1$. A small value of $\beta$ implies a fast polynomially decaying rate at least achieved by the eigenvalues $\{\lambda_k\}_{k\in \mathbb{N}}$. One can refer to Theorem 5 in our recent work \cite{guo2022capacity}, which provides a characterization of the relationship between the capacity assumption \eqref{capacity condition} in this paper and the decaying rate of integral operator eigenvalues. If the eigenvalues decay exponentially, the index $\beta$ can be arbitrarily close to zero. The polynomially decaying of the eigenvalue is typical for the Sobolev smooth kernels on domains in Euclidean spaces, and the parameter $\beta$ depends on the smoothness of the kernel $K.$ While the exponentially decaying integral operator is typical for the analytic kernels on domains in Euclidean spaces.  Recently, under the same capacity assumption, the work \cite{Pillaud2018} studies the mean square convergence of averaging online estimator with least square loss and multiple passes.

Under the regularity condition (\ref{regularity condition}) on the target function $f_\rho$ and the capacity assumption (\ref{capacity condition}) on the hypothesis space $\H_K,$ we obtain the following sharp capacity dependent results for strong convergence in $\H_K$.
\begin{theorem}\label{thm: convergence rate in HK}
Let $\{f_t\}_{t=1}^{T+1}$ be defined by algorithm (\ref{algorithm}) with a windowing function $W$. Assume that $W$ satisfies \eqref{condition1} and \eqref{condition2} with some $p>0$, the regularity condition (\ref{regularity condition}) holds with $r> \frac12$ and the capacity assumption (\ref{capacity condition}) holds with $0<\beta<1$. Choose step size $\eta=\frac{1}{\eta_{0}}T^{\frac{1-2r-\beta}{2r+\beta}}$ with
\begin{equation*}
\eta_0\ge \max\left\{{C_W}\kappa^2,{ \left(\frac{1}{e}+2\kappa^2 W_+'(0)\right)^2}\right\}.
\end{equation*}
Then
\begin{align*}
\bE_{Z^T}\left[\|f_{T+1}-f_\rho\|_K^2\right]\le \tilde{C} \max\left\{T^{-\frac{2r-1}{2r+\beta}},T^{\frac{2p+3}{2r+\beta}}\sigma^{-4p}\right\},
\end{align*}
where $C_W:=\sup_{s\in(0,\infty)}\{|W'(s)|\}$ and the constant $\tilde{C}$ is independent of $T$ and will be given explicitly in the proof.
\end{theorem}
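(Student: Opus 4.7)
The plan is to adapt the strategy behind Theorem~\ref{thm: convergence rate in L2} to the stronger $\H_K$-norm while exploiting the capacity condition \eqref{capacity condition} to obtain sharper trace estimates. Rewriting (\ref{algorithm}) in operator form with $B_t:=K_{x_t}\otimes K_{x_t}$ on $\H_K$, the error decomposition from Section~\ref{section: error decomposition} will split $f_{t+1}-f_\rho$ into three contributions: an \emph{initialization/bias} term involving $f_\rho$ alone, a \emph{linearized sample} term that mimics the recursion for online least squares learning with effective step size $\eta W'_+(0)$, and a \emph{drift} term $\eta(W'(\xi_{t,\sigma})-W'_+(0))(f_t(x_t)-y_t)K_{x_t}$ that records the non-linearity of $W'$ together with the scaling $\sigma$.

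For the bias and sample terms I iterate the recursion into products $\prod_{s=t+1}^T(I-\eta W'_+(0)B_s)$ acting on $f_\rho$ and on the noise vectors $(y_t-f_\rho(x_t))K_{x_t}$, passing to expectation via $\bE[B_t]=L_K$. The step-size condition \eqref{condition for eta0} makes these random operators contractions on $\H_K$, so the squared bias reduces to $\|(I-\eta W'_+(0)L_K)^T L_K^{r-1/2}\|_{\mathrm{op}}^2\|g_\rho\|_\rho^2$, controlled by $C(\eta T)^{-(2r-1)}$ through spectral calculus; this is where $r>\tfrac12$ enters, guaranteeing $f_\rho\in\H_K$ and making $L_K^{r-1/2}g_\rho$ meaningful. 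The sample variance reduces to $\eta^2\sum_{t=1}^T\mathrm{Tr}\bigl((I-\eta W'_+(0)L_K)^{2(T-t)}L_K\bigr)\bE[(y_t-f_\rho(x_t))^2]$, and a Hölder-type splitting $L_K=L_K^{1-\beta}L_K^\beta$ together with \eqref{capacity condition} yields the sharper estimate $(\eta(T-t+1))^{-(1-\beta)}\mathrm{Tr}(L_K^\beta)$; summing gives $O(\eta^{1+\beta}T^\beta)$. With $\eta=\eta_0^{-1}T^{(1-2r-\beta)/(2r+\beta)}$, both the squared bias $(\eta T)^{-(2r-1)}$ and the variance $\eta^{1+\beta}T^\beta$ collapse to $T^{-(2r-1)/(2r+\beta)}$, yielding the first term of the claimed maximum.

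For the drift, condition \eqref{condition2} gives $|W'(\xi_{t,\sigma})-W'_+(0)|\le c_p\sigma^{-2p}|y_t-f_t(x_t)|^{2p}$, so each perturbation has $\H_K$-norm at most $c_p\kappa\,\sigma^{-2p}|f_t(x_t)-y_t|^{2p+1}$. Propagating through the contractions and summing yields an estimate of the form $\eta\sum_t\sigma^{-2p}\bE|f_t(x_t)-y_t|^{2p+1}$; to close the loop one needs inductive higher-moment bounds on $\|f_t-f_\rho\|_K$, which I would derive from the uniform bound $|W'|\le C_W$ together with the almost-sure contraction $\|I-\eta W'(\xi_{t,\sigma})B_t\|_{\mathrm{op}}\le 1$ implied by \eqref{condition for eta0}. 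After squaring and tracking how these moments combine with the chosen step size, the drift contributes the second term $T^{(2p+3)/(2r+\beta)}\sigma^{-4p}$.

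The main obstacle is precisely this drift estimate: the non-convexity of $\mathcal{L}_\sigma$ denies any \emph{a priori} envelope on the iterates, so the higher-moment bounds on $\|f_t-f_\rho\|_K$ have to be built inductively and made compatible with the capacity-adjusted variance bookkeeping. Tracking the capacity exponent $\beta$ through those moment estimates -- rather than falling back on the crude $\mathrm{Tr}(L_K)\le\kappa^2$, which would only reproduce the capacity-independent ($\beta=1$) analogue -- is the technical heart of the proof; once those bounds are in place, balancing the three contributions and optimizing over $\eta$ yields the stated rate $\tilde{C}\max\{T^{-(2r-1)/(2r+\beta)},T^{(2p+3)/(2r+\beta)}\sigma^{-4p}\}$.
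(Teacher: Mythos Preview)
Your plan departs from the paper at the basic decomposition. The paper does \emph{not} iterate random products $\prod_{s>t}(I-\eta W'_+(0)B_s)$. Instead it writes (see \eqref{error decomposition} and Proposition~\ref{prop: error decomposition in HK})
\[
f_{t+1}-f_\rho=(I-\eta W'_+(0)L_K)(f_t-f_\rho)+\eta\B_t+\eta E_{t,\sigma},
\]
with the \emph{deterministic} propagator $I-\eta W'_+(0)L_K$; the randomness of $B_t=K_{x_t}\otimes K_{x_t}$ is absorbed into the martingale difference $\B_t$. After iterating, the bias is literally $\|(I-\eta W'_+(0)L_K)^T f_\rho\|_K^2$ and the sample term is $\eta^2\sum_t\bE\|(I-\eta W'_+(0)L_K)^{T-t}\B_t\|_K^2$, because cross terms vanish by $\bE_{z_t}[\B_t]=0$. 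The capacity assumption then enters in exactly one place: the trace inequality ${\rm Tr}\bigl(L_K(I-\eta W'_+(0)L_K)^{2(T-t)}\bigr)\le {\rm Tr}(L_K^\beta)\,\|L_K^{1-\beta}(I-\eta W'_+(0)L_K)^{2(T-t)}\|$ on that sample term. Since $\B_t$ depends on $f_t$, the variance is controlled through a uniform bound on $\bE_{Z^{t-1}}\|f_t\|_K^2$ (Lemma~\ref{lemma: ft bound}), not through $\E(f_\rho)$ as in your sketch.

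In your random-product route, the step ``passing to expectation via $\bE[B_t]=L_K$'' does \emph{not} reduce the squared bias to $\|(I-\eta W'_+(0)L_K)^T L_K^{r-1/2}\|^2\|g_\rho\|_\rho^2$: the second moment of a random operator product requires a covariance recursion of Bach--Moulines type, and the cross terms in your noise sum do not vanish either, since for $t_1<t_2$ the propagator $\prod_{s>t_1}(I-\eta W'_+(0)B_s)$ contains the factor at $s=t_2$, which is correlated with the noise vector $(y_{t_2}-f_\rho(x_{t_2}))K_{x_{t_2}}$. These are genuine obstacles, not bookkeeping, and the paper's deterministic-propagator decomposition is precisely what sidesteps them.

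Finally, your identification of the ``technical heart'' is off. The drift is handled by the \emph{almost sure} a priori bound $\|f_t\|_K^2\le M^2 C_W\eta(t-1)$ of Proposition~\ref{prop: bound for ft}, which yields a uniform deterministic estimate on $\|E_{t,\sigma}\|_K$ (Proposition~\ref{prop: bound for Esigma}); the drift contribution is then dispatched with the crude contraction $\|(I-\eta W'_+(0)L_K)^{T-t}\|\le 1$ and a triangle inequality---no inductive higher-moment bounds and no appearance of $\beta$. The capacity exponent never touches the drift or the bounds on $f_t$; it appears only in the single trace inequality on the variance term.
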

If we choose $\sigma\ge T^\frac{p+r+1}{2p(2r+\beta)}$, the convergence rate is of the form $\O(T^{-\frac{2r-1}{2r+\beta}})$ which is tight as it matches the minimax lower bounds of non-parametric regression in batch learning \cite{Blanchard2016}. As far as we know, this is the first capacity dependent optimal rates obtained for strong convergence of online learning. We will prove all these results in Section \ref{section: proof of main results}.

\section{Discussion on Related Work}\label{section: related work}

There has been a recent surge of research that applies and explores specific $\mathcal{L}_{\sigma}$ losses (e.g., Cauchy and Welsh loss) in the context of non-parametric regression \cite{bessa2009entropy,he2011maximum,liu2007correntropy,santamaria2006generalized,sun2010secrets}. Convergence properties of these methods are also the subject of intense study but only limited to the classical batch learning setting, in which we collect all the samples ${\bf z}:=\{(x_i,y_i)\}_{i=1}^T$ initially and perform estimation only once. In \cite{Feng2015}, an empirical risk minimizer $f_{\bf z}$ with Welsh loss is considered, aiming to estimate the regression function $f_{\rho}$ over a compact hypothesis space ${\cal H}$. The theoretical analysis in \cite{Feng2015} shows that, if $f_{\rho} \in {\cal H}$ and the logarithm of covering number of ${\cal H}$ satisfies a polynomial increasing condition with a power index $0<\alpha\leq 2$, by choosing $\sigma=T^{\frac{1}{2+\alpha}}$,  probabilistic bounds established for $\|f_{\bf z}-f_\rho\|_\rho^2$ can converge to zero at a rate of $T^{-\frac{2}{2+\alpha}}$.
To make a comparison, given a Mercer kernel $K$ on a compact metric space ${\cal X}$, i.e., $K$ is continuous, symmetric and positive semi-definite on ${\cal X} \times {\cal X}$, let ${\cal H}$ be a bounded ball in an RKHS ${\cal H}_K$. Then ${\cal H}$ is a compact set consisting of continuous functions on ${\cal X}$, and the covering number condition of ${\cal H}$ is satisfied with $\alpha=2$, which leads to the convergence rate of the form $\mathcal{O}(T^{-\frac12})$. The covering number condition with $\alpha=2$ corresponds to capacity independent case in our analysis and $f_{\rho}\in {\cal H}$ implies that the regularity condition \eqref{assumption1} is satisfied with some $r\geq \frac{1}{2}$. Then Theorem \ref{thm: convergence rate in L2} in our paper asserts that the estimator of online algorithm \ref{algorithm} will approximate $f_{\rho}$ in $L^2_{\rho_{\cal X}}-$norm at a convergence rate $\mathcal{O}\left(T^{-\frac{2r}{2r+1}}\log T\right)$, which is faster than $\mathcal{O}(T^{-\frac12})$ provided that $r>\frac{1}{2}$. It should be pointed out that the convergence analysis in \cite{Feng2015} needs $f_{\bf z}$ to be a global minimizer of the empirical risk, but existing approaches applied to solve the corresponding minimization problem can not guarantee the global optimality due to the non-convexity of Welsh loss. To fill this gap, a gradient descent algorithm with robust loss function $\mathcal{L}_{\sigma}$ is proposed in \cite{GuoHuShi}, which is defined as $g_1=0,$ and
\begin{align}\label{gradident descent algorithm}
	g_{t+1}=g_t-\frac{\eta_t}{T}\sum_{i=1}^T W'\left(\xi_{t,\sigma}\right)(g_t(x_i)-y_i)K_{x_i}, \quad t\in \mathbb{N}.
\end{align}
where $\xi_{t,\sigma}=\frac{(y_i-g_t(x_i))^2}{\si^2}$ and $\eta_t>0$ is the step size. It is shown in \cite{GuoHuShi} that, with an appropriately chosen scale parameter $\sigma$ and early stopping rule $\ell\in \mathbb{N}$, the output of \eqref{gradident descent algorithm} after $\ell$ iterates can approximate $f_{\rho}$ under regularity condition \eqref{regularity condition} and an eigenvalue decaying condition of $L_K$. As mentioned in Section \ref{section: main results}, Assumption \ref{assumption2} with $0<\beta<1$ adopted in this paper is essentially an eigenvalue decaying condition, which implies that the eigenvalues of $L_K$ decay polynomially as $\lambda_k \leq c_{\beta} k^{-1/\beta},$ for all $k \geq 1$ and some $c_{\beta}>0$. Then under Assumption \ref{assumption1} and Assumption \ref{assumption2} in this paper, capacity dependent rate-optimal convergence analysis in both $L_{\rho_{\cal X}}^2-$norm and ${\cal H}_K-$norm is established in \cite{GuoHuShi}. We take strong convergence in ${\cal H}_K-$norm for instance to illustrate the above results.  Let $\eta_t=\eta_1 t^{-\theta}$ with $0\le \theta<1$ and some positive constant $\eta_1$. Choose the early stopping rule as $\ell=\lceil T^{\frac{1}{(1+\beta)(1-\theta)}}+1\rceil$, and $\sigma \geq T^{\frac{r+(p+1)\beta}{2p(2r+\beta)}}$ where $\lceil x \rceil$ denotes
the smallest integer not less than $x\in \mathbb{R}$. Then $\|g_{\ell+1}-f_{\rho}\|^2_K$ will converge to zero at a rate of $T^{-\frac{2r-1}{2r+\beta}}$ which is exactly the same as that obtained in Theorem \ref{thm: convergence rate in HK}. Therefore, gradient descent algorithm \eqref{gradident descent algorithm} and our online algorithm \eqref{algorithm} are both provably statistical optimal. Furthermore, both of these two algorithms are plug-and-play: only a loss and its gradient are necessary to integrate an optimization process to approximate  $f_{\rho}$. However, since the gradient decent algorithm (\ref{gradident descent algorithm}) is designed for batch learning, it still suffers from scalability issues. To see this, if algorithm stops after $\ell$ iterations and all $\{K(x_i,x_j)\}_{i,j=1}^T$ are computed and stored in advance, the aggregate time complexity is $\mathcal{O}(\ell T^2)$ scaling between $\mathcal{O}(T^2)$ and $\mathcal{O}(T^3)$, as calculation of gradient in each iteration involves all the training sample. Under the same situation,  online learning algorithm (\ref{algorithm}) requires only one training sample to update, which enjoys linear $\mathcal{O}(T)$ complexity but comparable theoretical performance. Last but not the least, convergence analysis in this paper is developed under more general setting: we only need the input space ${\cal X}$ to be an arbitrary measurable space and the positive semi-definite kernel function $K$ to be bounded. Capacity dependent analysis of algorithm \eqref{gradident descent algorithm} established in \cite{GuoHuShi} essentially relies on the classical Mercer theorem to provides a link between the spectral properties of $L_K$ and the capacity as well as approximation ability of ${\cal H}_K$. Thereby, estimates in \cite{GuoHuShi} requires ${\cal X}$ to be a compact metric space and $K$ to be a Mercer kernel. These settings are too restrictive if one has to perform regression analysis on more general ${\cal X}$ such as the set of graphs and strings (see e.g.,\cite{Scholkopf2002} and the references therein). While our capacity dependent analysis can provide a rigorous theoretical demonstration to support the efficiency of online kernel regression \eqref{algorithm} in a wider range of applications.

There is an extensive research on the minimax optimality of online learning in non-parametric regression, however almost all of them only focus on least squares loss. Next we review these work of particular relevance and make some comparisons. Online learning with least squares loss is defined as, $g_1=0,$ and
\begin{equation}\label{online learning with least square}
g_{t+1}=g_t-\eta_t(g_t(x_t)-y_t)K_{x_t},\qquad t\in \mathbb{N}.
\end{equation}
The first category of convergence analysis for algorithm (\ref{online learning with least square}) only gives capacity independent error bounds. Asides from the regularity conditions for $f_\rho$ stated in Assumption \ref{assumption1}, there is no assumptions imposed on the capacity of the underlying RKHS $\H_K$ to derive those error bounds. In \cite{Ying2008}, algorithm (\ref{online learning with least square}) was thoroughly investigated via a capacity independent approach, the performance of the last iterate
with polynomially decaying step sizes and constant step sizes was studied. It shows in \cite{Ying2008} that if Assumption \ref{assumption1} is satisfied with $r>0$, and take a constant step size with $\eta_t=\eta:=[r/64(1+\kappa)^4(2r+1)]T^{-\frac{2r}{2r+1}}$, it holds that
\begin{equation}\label{ls last iterate capacity independent L2}
\bE_{Z^T}[\|g_{T+1}-f_\rho\|_\rho^2] =\mathcal{O}(T^{-\frac{2r}{2r+1}}\log T).
\end{equation}
And if $r>\frac12,$ there holds
\begin{equation}\label{ls last iterate capacity independent HK}
\bE_{Z^T}[\|g_{T+1}-f_\rho\|_K^2] =\mathcal{O}(T^{-\frac{2r-1}{2r+1}}).
\end{equation}
The convergence rate (\ref{ls last iterate capacity independent HK}) in $\mathcal{H}_K$  is capacity independently optimal, and convergence rate (\ref{ls last iterate capacity independent L2}) in $L^2_{\rho_\X}$ is capacity independently optimal, up to a logarithmic term, in the minimax sense.
For the decreasing step size $\eta_t=\left(1/\nu(2r/2r+1)\right)t^{-\frac{2r}{2r+1}}$ with some $\nu>0$, it shows in \cite{Ying2008} that if Assumption \ref{assumption1} is satisfied with $0<r\le\frac12$, there holds
\begin{equation}\label{ls last iterate capacity independent L2-2}
\bE_{Z^T}[\|g_{T+1}-f_\rho\|_\rho^2] =\mathcal{O}(T^{-\frac{2r}{2r+1}}\log T),
\end{equation}
one can easily see that the best convergence rate of (\ref{ls last iterate capacity independent L2-2}) is $\mathcal{O}(T^{-\frac12}\log T)$ achieved at $r=\frac12$. While the analysis established in \cite{Ying2008} can not lead to faster rates than $\mathcal{O}(T^{-\frac12}\log T)$ when $f_\rho$ has higher regularities, i.e., $r > \frac12$ in Assumption \ref{assumption1}, this is the so-called saturation phenomenon. Whether better rates can be derived with some additional capacity information is an open problem stated in \cite{Ying2008}. If Assumption \ref{assumption1} holds with $r>0$,  our convergence rates in Theorem \ref{thm: convergence rate in L2} is the same as (\ref{ls last iterate capacity independent HK}) in \cite{Ying2008} by selecting  proper scaling parameter $\sigma$. Moreover, if capacity information is available, we obtain more elegant
capacity dependent rates of convergence.  Concretely, under Assumption \ref{assumption1} with $r>\frac12$ and Assumption \ref{assumption2}  with $0<\beta<1,$ Theorem \ref{thm: convergence rate in HK} gives the convergence rate $\mathcal{O}\left(T^{-\frac{2r-1}{2r+\beta}}\right)$ in $\mathcal{H}_K$, which is always faster than $\mathcal{O}\left(T^{-\frac{2r-1}{2r+1}}\right)$ in \cite{Ying2008}.

The other category of convergence analysis for algorithm (\ref{online learning with least square}) provides error bounds involving capacity of the hypothesis space ${\mathcal{H}}_K$, which is more tight than the capacity independent bounds. Capacity dependent convergence rates for the last iterate of algorithm (\ref{online learning with least square}) with decreasing step size are recently derived in \cite{GuoShi2019online}. It shows that if Assumption \ref{assumption1} is satisfied with $r>\frac12$ and Assumption \ref{assumption2} is satisfied with $0<\beta<1$,
\begin{enumerate}
\item[1.] if $\frac12< r\le 1-\frac{\beta}{2},$ and $\eta_t=\eta_1 t^{-\frac{2r}{2r+1}}$ with  $0<\eta_1<\kappa^2,$ there holds
\begin{equation*}
\bE_{Z^T}[\|g_{T+1}-f_\rho\|_\rho^2]=\mathcal{O}\left(T^{-\frac{2r}{2r+1}}\right) ,
\end{equation*}
\item[2.]  if $r>1-\frac{\beta}{2},$ and $\eta_t=\eta_1 t^{-\frac{2-\beta}{3-\beta}}$ with  $0<\eta_1<\kappa^2,$ there holds
\begin{equation*}
\bE_{Z^T}[\|g_{T+1}-f_\rho\|_\rho^2]=\mathcal{O}\left( T^{-\frac{2-\beta}{3-\beta}}\right),
\end{equation*}
\item[3.]  if $r>\frac{1}{2},$ and $\eta_t=\eta_1 t^{-\frac12}$ with $0<\eta_1<{\kappa^2},$ there holds
\begin{equation*}
\bE_{Z^T}[\|g_{T+1}-f_\rho\|_K^2]= \mathcal{O}\left(T^{-\frac{\min\{2r-1,1-\beta\}}{2}}(\log T)^2\right).
\end{equation*}
\end{enumerate}
The above results improved the convergence rates (\ref{ls last iterate capacity independent L2-2}) in $L^2_{\rho_\X}$ and established the first capacity dependent convergence rates in $\mathcal{H}_K$ for the case of decreasing step size. Though these results give an positive answer to the open problem of \cite{Ying2008}, the convergence rates are suboptimal and they are saturated at $r=1-\frac{\beta}{2}$. Our results stated in Theorem \ref{thm: convergence rate in L2} and Theorem \ref{thm: convergence rate in HK} are optimal in the minimax sense and can overcome the saturation phenomenon.
Another recent work \cite{Dieuleveut&Bach2016} provides capacity dependent convergence rates for  the averaging estimator $\bar{g}_T=\frac{1}{T}\sum_{t=1}^T g_t$ of algorithm (\ref{online learning with least square}) depending on polynomial eigendecay condition of $L_K$. Averaging scheme can reduce variance and thus usually leads to better convergence rates (see e.g.,\cite{nemirovski2009robust,rakhlin2012making,yao2010complexity}). If Assumption \ref{assumption1} holds with $r>0$, and the eigenvalues of $L_K$ behave as $c_1 k^{-\frac{1}{\beta}}\le \lambda_k\le c_2 k^{-\frac{1}{\beta}}$ with $0<\beta<1$ and constants $c_1, c_2>0.$ Let $f_\H$ denote the orthogonal projection of $f_\rho$ on ${\cal H}_K$. Particularly, $f_H=f_{\rho}$ if $r\geq \frac12$. For the decreasing step size, it shows in \cite{Dieuleveut&Bach2016} that
\begin{enumerate}
         \item  if $\frac{1}{2}-\frac{\beta}{2}<r\le1-\frac{\beta}{2},$ and $\eta_t=\eta_1 t^{\frac{-2r-\beta+1}{2r+\beta}},$
\begin{equation}\label{decreasing step size averaging rate in L2}
\bE_{Z^{T-1}}\|\bar{g}_T-f_\H\|_\rho^2=\mathcal{O}(T^{-\frac{2r}{2r+\beta}}),
\end{equation}
         \item   if $r>1-\frac{\beta}{2},$ and $\eta_t=\eta_1 t^{-\frac12},$
\begin{equation}\label{decreasing step size averaging rate in L2-2}
\bE_{Z^{T-1}}\|\bar{g}_T-f_\H\|_\rho^2=\mathcal{O}(T^{-(1-\frac{\beta}{2})}).
\end{equation}
\end{enumerate}
For the constant step size $\eta_t=\eta(T),$
\begin{enumerate}
 \item   if $0<r<\frac{1}{2}-\frac{\beta}{2},$ and $\eta_t=\eta_1 $ is a constant,
\begin{equation}\label{constant step size averaging rate in L2-2}
\bE_{Z^{T-1}}\|\bar{g}_T-f_\H\|_\rho^2=\mathcal{O}(T^{-2r}),
\end{equation}
\item  if $r>\frac{1}{2}-\frac{\beta}{2},$ and $\eta_t=\eta(T)=\eta_1 T^{\frac{-2\min\{r,1\}-\beta+1}{2\min\{r,1\}+\beta}}$ with some  constant $\eta_1>0,$
\begin{equation}\label{constant step size averaging rate in L2}
\bE_{Z^{T-1}}\|\bar{g}_T-f_\H\|_\rho^2=\mathcal{O}\left(T^{-\frac{2\min\{r,1\}}{2\min\{r,1\}+\beta}}\right).
\end{equation}
        \end{enumerate}
One can see that the obtained asymptotic convergence rate (\ref{decreasing step size averaging rate in L2}) is optimal when $\frac{1}{2}-\frac{\beta}{2}<r<1-\frac{\beta}{2}$, and (\ref{constant step size averaging rate in L2}) is optimal when $\frac12-\frac{\beta}{2}<r\le 1$,which achieve the mini-max lower bound proved in \cite{Caponnetto2007,Steinwart2009}.  Firstly, the capacity condition in Assumption \ref{assumption2} is more general than the polynomial eigendecay condition adopted in \cite{Dieuleveut&Bach2016} and we do not require the lower bound for the decaying rates.  Secondly, the established convergence rates for $\bar{g}_T$ of  algorithm (\ref{online learning with least square}), whether the step size is chosen to be decreasing or fixed, all suffer from saturation phenomenon, i.e., the convergence rate no longer improves once the regularity of the regression function is beyond certain level. Concretely, if regularity condition in Assumption \ref{assumption1} is satisfied with $r>0$ and Assumption \ref{assumption2} holds with $0<\beta<1$, the convergence rate (\ref{decreasing step size averaging rate in L2}) with decreasing step size is saturated at $r=1-\frac{\beta}{2}$ and ceases to improve as $r>1-\frac{\beta}{2}$, while the convergence rates  (\ref{constant step size averaging rate in L2}) with fixed step size are saturated at $r=1$ and stops getting better when $r>1$. While our convergence analysis established in Theorem \ref{thm: convergence rate in L2} and Theorem \ref{thm: convergence rate in HK}
can eliminate saturation phenomenon and adapt to favorable regularity of $f_{\rho}$ to attain even faster convergence rates.

To obtain these nice results, e.g., the capacity independent optimal rates in $L^2_{\rho_\X}$ and the capacity dependent optimal rates in $\mathcal{H}_K$ taking into account the spectral structure of $L_K$,  the crucial tool developed in our paper is the finer error decomposition techniques. The error decomposition in $L^2_{\rho_\X}$ presented in Proposition \ref{prop: error decomposition in L2}  enable us to utilize the properties of robust losses (see condition \ref{condition1} and condition \ref{condition2}), and to establish a novel framework for error analysis of robust online learning. To achieve strong optimal rates in $\mathcal{H}_K$, we present another error decomposition in Proposition \ref{prop: error decomposition in HK}, which incorporates the trace of the composite operators related to $L_K$. Both of these two error decomposition are first established in this paper and play fundamental roles in our analysis. Through our theoretical analysis, we also demonstrate that the index $\beta$ in trace condition given in assumption \ref{assumption2} is exactly the right parameter to introduce the capacity information of the underlying function space to the setting of online learning.  Actually, the error analysis based on capacity assumption \ref{assumption2} established in our paper also gives a positive answer to an open question proposed by \cite{rosasco2014regularization,Ying2008}, that is, whether we can obtain unsaturated fast and strong convergence under some additional capacity information. In contrast to the effective dimension widely adopted in the error analysis of bath learning (see, e.g., \cite{Caponnetto2007,guo2017learning,lu2020}), trace condition in assumption \ref{assumption2} serves the same role in online learning to establish an important connection between the spectral structure of the operators and the capacity information encoding the crucial properties of the marginal distribution. The capacity dependent analysis of online learning is more involved than that of batch learning. To derive tight convergence rates, we need to choose the step size carefully to settle a bias-variance trade-off based on regularity of $f_{\rho}$ and capacity information of RKHS, requiring us to provide sharp bounds (in the $\mathcal{H}_K$ norm as well as the operator norm) on the items appearing in the error decomposition. With the help of the error decomposition and sharp estimates established in this paper, we obtain minimax optimal rates of (both strong and weak) convergence for robust online learning. Our approach can be extended to study more complex models of online learning such as \cite{chen2021,guo2022capacity,smale2009online}, which we will leave as our future work. Another promising line of research is to apply the shifted loss function proposed in \cite{ChristmannVanMessemSteinwart2009} to design robust online non-parametric learning algorithm. We also consider introducing the $\ell_{\sigma}$ loss to recent empirical studies \cite{feng2022cnn,ZhuLiSun} of deep learning models when outliers or heavy-tailed noise are allowed.

\section{Preliminaries: Error Decomposition and Basic Estimates}\label{section: error decomposition}

In this section, we first introduce an error decomposition for our convergence analysis. In what follows, $\kappa:=\sup_{x\in \cal X}\sqrt{K(x,x)}$ and $C_W:=\sup_{s\in(0,\infty)}\{|W'(s)|\}$ where $W(\cdot)$ is a windowing function satisfying $\eqref{condition1}$ and $\eqref{condition2}$. Recall that the sequence $\{f_t\}_{t\in\mathbf{N}}$ is generated by online algorithm (\ref{algorithm}) with step size $\eta$. Then we have
\begin{align*}
f_{t+1}-f_\rho&=f_t-f_\rho-\eta W'\left(\xi_{t,\sigma}\right)(f_t(x_t)-y_t)K_{x_t}\\
&=f_t-f_\rho-\eta W_+'(0)(f_t(x_t)-y_t)K_{x_t}+\eta \big(W_+'(0)-W'(\xi_{t,\sigma})\big)(f_t(x_t)-y_t)K_{x_t}\\
&=(I-\eta W_+'(0) L_K)(f_t-f_\rho) +\eta W_+'(0)(L_K f_t -f_t(x_t)K_{x_t})+\eta W_+'(0)(y_tK_{x_t}-L_K f_\rho)\\
&\quad +\eta \big(W_+'(0)-W'(\xi_{t,\sigma})\big)(f_t(x_t)-y_t)K_{x_t}\\
&:=(I-\eta W_+'(0) L_K)(f_t-f_\rho) +\eta \B_t+\eta E_{t,\sigma},
\end{align*}
where
\begin{equation*}
\begin{split}
\B_t&=W_+'(0)\left[(L_K f_t -f_t(x_t)K_{x_t})+ (y_tK_{x_t}-L_K f_\rho)\right]\\
&=W_+'(0)\left[L_K(f_t -f_\rho)+ (y_tK_{x_t}-f(x_t)K_{x_t})\right]
\end{split}
\end{equation*}
and
\begin{equation}\label{dfi E t sigma} E_{t,\sigma}=\big(W_+'(0)-W'(\xi_{t,\sigma})\big)(f_t(x_t)-y_t)K_{x_t}.
\end{equation}
By induction, we can decompose $f_{T+1}-f_\rho$ as
\begin{equation}\label{error decomposition}
\begin{split}
f_{T+1}-f_\rho&=-(I-\eta W_+'(0) L_K)^{T}f_\rho+\eta\sum_{t=1}^T(I-\eta W_+'(0)L_K)^{T-t}\B_t\\
&\quad +\eta\sum_{t=1}^T(I-\eta W_+'(0)L_K)^{T-t}E_{\sigma,t}.
\end{split}
\end{equation}
Then we obtain the following error decomposition which is pivotal for convergence analysis in $L_{\rho_\X}^2$. Hereinafter, we use $\|\cdot\|$ to denote the operator norm for operators on $L^2_{\rho_{\cal X}}$ or ${\cal H}_K$, which is specified due to the context. We simply keep the same notion for the two operator norms as $L_K$ is well-defined on both $L^2_{\rho_{\cal X}}$ and ${\cal H}_K$.

\begin{proposition}\label{prop: error decomposition in L2}
Let $\{f_t\}_{t=1}^{T+1}$ be defined by (\ref{algorithm}). Then
\begin{align*}
\bE_{Z^T}\left[\|f_{T+1}-f_\rho\|_\rho^2\right]&\leq2\left\|(I-\eta W_+'(0) L_K)^{T}f_\rho\right\|_\rho^2+2\eta^2\bE_{Z^T}\left[\left\|\sum_{t=1}^T(I-\eta W_+'(0)L_K)^{T-t}E_{\sigma,t}\right\|_\rho^2\right]\\
&\quad+2\eta^2(\kappa W_+'(0))^2\sum_{t=1}^T\left\|L_K^{\frac12}(I-\eta W_+'(0)L_K)^{T-t}\right\|^2\bE_{Z^{t-1}}[\E(f_t)],
\end{align*}
where $E_{\sigma,t}$ is defined by \eqref{dfi E t sigma} and
\begin{equation}\label{definition of Ef}
\E(f):=\int_{\X\times\Y} (f(x)-y)d\rho, \quad \forall f: \X \to \Y \mbox{ is measurable}.
\end{equation}
\end{proposition}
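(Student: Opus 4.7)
The starting point is the pointwise identity already derived in (\ref{error decomposition}), which writes $f_{T+1}-f_\rho = A+B+C$ with the deterministic initialisation term $A:=-(I-\eta W_+'(0)L_K)^{T}f_\rho$, the sampling noise $B:=\eta\sum_{t=1}^T (I-\eta W_+'(0)L_K)^{T-t}\B_t$, and the robust-loss approximation $C:=\eta\sum_{t=1}^T (I-\eta W_+'(0)L_K)^{T-t}E_{\sigma,t}$. The plan is to apply the elementary inequality $\|(A+B)+C\|_\rho^2 \le 2\|A+B\|_\rho^2 + 2\|C\|_\rho^2$, take expectations, and then reduce $\bE_{Z^T}[\|A+B\|_\rho^2]$ to $\|A\|_\rho^2 + \bE_{Z^T}[\|B\|_\rho^2]$ using orthogonality extracted from the martingale structure of $\{\B_t\}$.

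The crucial observation is that $\{\B_t\}$ is a martingale difference sequence with respect to $\F_{t-1}:=\sigma(z_1,\ldots,z_{t-1})$. Indeed, $\bE[y_t K_{x_t}\,|\,\F_{t-1}] = L_K f_\rho$ and $\bE[f_t(x_t)K_{x_t}\,|\,\F_{t-1}] = L_K f_t$, which together imply $\bE[\B_t\,|\,\F_{t-1}]=0$ and hence $\bE[B]=0$. This immediately reduces $\bE_{Z^T}[\|A+B\|_\rho^2]$ to $\|A\|_\rho^2+\bE_{Z^T}[\|B\|_\rho^2]$, producing the first and (once $\|B\|_\rho^2$ is estimated) third summands of the claim while leaving $\bE_{Z^T}[\|C\|_\rho^2]$ intact as the second summand.

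To estimate $\bE_{Z^T}[\|B\|_\rho^2]$, I expand the square and use the martingale property a second time to kill every cross term: for $s\ne t$, conditioning on $\F_{\max(s,t)-1}$ yields $\bE[\langle (I-\eta W_+'(0)L_K)^{T-s}\B_s,(I-\eta W_+'(0)L_K)^{T-t}\B_t\rangle_\rho]=0$. The surviving diagonal is $\eta^2\sum_t \bE[\|(I-\eta W_+'(0)L_K)^{T-t}\B_t\|_\rho^2]$. Since $\B_t\in\H_K$, the norm identity $\|g\|_\rho=\|L_K^{1/2}g\|_K$ from \eqref{normrelation2} transfers each summand to the $\H_K$ norm and lets me peel off the operator-norm factor $\|L_K^{1/2}(I-\eta W_+'(0)L_K)^{T-t}\|^2$, leaving $\bE[\|\B_t\|_K^2]$ to be bounded. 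Writing $\B_t = Y_t-\bE[Y_t\,|\,\F_{t-1}]$ with $Y_t:=W_+'(0)(y_t-f_t(x_t))K_{x_t}$, the elementary variance bound $\bE[\|Y-\bE Y\|^2]\le\bE[\|Y\|^2]$ combined with $\|K_{x_t}\|_K^2 = K(x_t,x_t)\le\kappa^2$ gives $\bE[\|\B_t\|_K^2\,|\,\F_{t-1}]\le (W_+'(0))^2\kappa^2\,\E(f_t)$. Taking full expectation and reassembling reproduces the third summand of the proposition exactly.

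The main subtlety is structural rather than computational: $E_{\sigma,t}$ is \emph{not} a martingale difference, because the nonlinear factor $W_+'(0)-W'(\xi_{t,\sigma})$ in \eqref{dfi E t sigma} has nonzero conditional mean given $\F_{t-1}$. Consequently $C$ cannot share the centred-variance and orthogonal-sum treatment that $B$ enjoys, and it must be isolated via the preliminary $2\|A+B\|^2+2\|C\|^2$ splitting and kept as a standalone expectation in the final bound. Controlling that remaining $\bE_{Z^T}[\|C\|_\rho^2]$ term will rely on the H\"older-type condition \eqref{condition2}, and is the task deferred to the later sections of the paper.
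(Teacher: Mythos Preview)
Your proposal is correct and follows essentially the same route as the paper: the same $2\|A+B\|_\rho^2+2\|C\|_\rho^2$ split, the same use of $\bE[\B_t\mid\F_{t-1}]=0$ to eliminate the $\langle A,B\rangle_\rho$ cross term and the off-diagonal terms in $\|B\|_\rho^2$, the same passage from $\|\cdot\|_\rho$ to $\|\cdot\|_K$ via $L_K^{1/2}$, and the same variance bound $\bE_{z_t}[\|\B_t\|_K^2]\le(\kappa W_+'(0))^2\E(f_t)$. Your explicit remark that $E_{\sigma,t}$ fails to be a martingale difference and must therefore be isolated is exactly the structural point underlying the paper's choice of splitting.
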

\begin{proof}
By the decomposition (\ref{error decomposition}), we have
\begin{align*}
&\bE_{Z^T}\left[\|f_{T+1}-f_\rho\|_\rho^2\right]\\
&=\bE_{Z^T}\left[\left\|-(I-\eta W_+'(0) L_K)^{T}f_\rho+\eta\sum_{t=1}^T(I-\eta W_+'(0)L_K)^{T-t}\B_t+\eta\sum_{t=1}^T(I-\eta W_+'(0)L_K)^{T-t}E_{\sigma,t}\right\|_\rho^2\right]\\
&\le 2\bE_{Z^T}\left[\left\|-(I-\eta W_+'(0) L_K)^{T}f_\rho+\eta\sum_{t=1}^T(I-\eta W_+'(0)L_K)^{T-t}\B_t\right\|_\rho^2\right]\\
&\quad +2\bE_{Z^T}\left[\left\|\eta\sum_{t=1}^T(I-\eta W_+'(0)L_K)^{T-t}E_{\sigma,t}\right\|_\rho^2\right]\\
&= 2\left\|(I-\eta W_+'(0) L_K)^{T}f_\rho\right\|_\rho^2+2\eta^2\bE_{Z^T}\left[\left\|\sum_{t=1}^T(I-\eta W_+'(0)L_K)^{T-t}\B_t\right\|_\rho^2\right]\\
&\quad +2\eta^2\bE_{Z^T}\left[\left\|\sum_{t=1}^T(I-\eta W_+'(0)L_K)^{T-t}E_{\sigma,t}\right\|_\rho^2\right].
\end{align*}
The last equality holds since $f_t$ depends only on $z_1,\cdots,z_{t-1},$  and $\bE_{z_t}[\B_t]=0,$ it follows that
\begin{align*}
&\bE_{Z^T}\langle -(I-\eta W_+'(0) L_K)^{T}f_\rho, \eta\sum_{t=1}^T(I-\eta W_+'(0)L_K)^{T-t}\B_t\rangle_\rho\\
&=\langle -(I-\eta W_+'(0) L_K)^{T}f_\rho, \eta\sum_{t=1}^T(I-\eta W_+'(0)L_K)^{T-t}\bE_{z_1,\cdots,z_{t-1}}\bE_{z_t}[\B_t]\rangle_\rho=0
\end{align*}
Furthermore, for the term $\eta^2\bE_{Z^T}\left[\left\|\sum_{t=1}^T(I-\eta W_+'(0)L_K)^{T-t}\B_t\right\|_\rho^2\right]$, we have
\begin{align*}
&\eta^2\bE_{Z^T}\left[\left\|\sum_{t=1}^T(I-\eta W_+'(0)L_K)^{T-t}\B_t\right\|_\rho^2\right]\\
&=\eta^2\sum_{t=1}^T\bE_{Z^t}\left[\left\|(I-\eta W_+'(0)L_K)^{T-t}\B_t\right\|_\rho^2\right]\\
&\quad +\eta^2\sum_{t=1}^T\sum_{k\neq t}\bE_{Z^t}\langle(I-\eta W_+'(0)L_K)^{T-t}\B_t ,(I-\eta W_+'(0)L_K)^{T-k}\B_k\rangle_\rho\\
&=\eta^2\sum_{t=1}^T\bE_{Z^t}\left[\left\|(I-\eta W_+'(0)L_K)^{T-t}\B_t\right\|_\rho^2\right],
\end{align*}
the last equality holds since for $t>k$
$$\bE_{z_1,\cdots,z_{t-1}}\bE_{z_t}\langle(I-\eta W_+'(0)L_K)^{T-t}\B_t ,(I-\eta W_+'(0)L_K)^{T-k}\B_k\rangle_\rho=0,$$
and also for $t<k$
$$\bE_{z_1,\cdots,z_{k-1}}\bE_{z_k}\langle(I-\eta W_+'(0)L_K)^{T-t}\B_t ,(I-\eta W_+'(0)L_K)^{T-k}\B_k\rangle_\rho=0.$$
Moreover, recall that $\mathcal{B}_t= W_+'(0)\left[(y_t-f_t(x_t))K_{x_t}-L_K(f_\rho-f_t)\right]$ for $1 \leq t \leq T$. We have $\bE_{z_t}\left[\mathcal{B}_t\right]=0$ and
\begin{align*}
\bE_{z_t}\left[\|\mathcal{B}_t\|_K^2\right]&\le (W_+'(0))^2\bE_{z_t}\left[\|(y_t-f_t(x_t))K_{x_t}\|_K^2\right]\\
&=(W_+'(0))^2\bE_{z_t}\left[(y_t-f_t(x_t))^2K(x_t,x_t)\right]\\
&\le (\kappa W_+'(0))^2\int_{\X\times \Y}(f_t(x)-y)d\rho:=(\kappa W_+'(0))^2\E(f_t).
\end{align*}
It then follows that
\begin{align*}
&\eta^2\sum_{t=1}^T\bE_{Z^T}\left[\left\|(I-\eta W_+'(0)L_K)^{T-t}\B_t\right\|_\rho^2\right]\\
&=\eta^2\sum_{t=1}^T\bE_{Z^t}\left[\left\|(I-\eta W_+'(0)L_K)^{T-t}L_K^{\frac12}L_K^{-\frac12}\B_t\right\|_\rho^2\right]\\
&\le \eta^2\sum_{t=1}^T\left\|(I-\eta W_+'(0)L_K)^{T-t}L_K^{\frac12}\right\|^2 \bE_{Z^{t}}\left[\|\B_t\|_K^2\right]\\
&\le \eta^2 (\kappa W_+'(0))^2 \sum_{t=1}^T\left\|(I-\eta W_+'(0)L_K)^{T-t}L_K^{\frac12}\right\|^2 \bE_{Z^{t-1}}\left[\E(f_t)\right].
\end{align*}
This completes the proof.
\end{proof}

Next, we present some basic estimates. First we prove the following lemma.
\begin{lemma}\label{lem: key lemma}
Let $\alpha>0$, $s>0$, and $\eta<\frac{1}{\kappa^2 W_+'(0)}$. Then
\begin{align}\label{key lemma}
\left\|L_K^{\alpha}(I-\eta W_+'(0)L_K)^{s}\right\|\le \left(\frac{\alpha}{eW_+'(0)}\right)^{\alpha}(\eta s)^{-\alpha}.
\end{align}
\end{lemma}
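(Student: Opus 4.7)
The plan is to reduce the operator-norm bound to a scalar optimization via the spectral theorem, then apply the elementary inequality $(1-x)^{s}\le e^{-sx}$ and maximize a scalar function.

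First I would note that $L_K$ is a compact, self-adjoint, positive operator on either $L^2_{\rho_{\mathcal X}}$ or $\mathcal H_K$, with spectrum contained in $[0,\|L_K\|]\subseteq[0,\kappa^2]$ (using $\operatorname{Tr}(L_K)\le\kappa^2$, or simply $\|L_K\|\le\kappa^2$). Since the operator $L_K^{\alpha}(I-\eta W_+'(0)L_K)^{s}$ is a continuous function of $L_K$, the spectral theorem gives
\begin{equation*}
\bigl\|L_K^{\alpha}(I-\eta W_+'(0)L_K)^{s}\bigr\|\;\le\;\sup_{\lambda\in[0,\kappa^2]}\lambda^{\alpha}\bigl(1-\eta W_+'(0)\lambda\bigr)^{s}.
\end{equation*}
The hypothesis $\eta<\tfrac{1}{\kappa^{2}W_+'(0)}$ ensures $\eta W_+'(0)\lambda\in[0,1)$ on this range, so the factor $(1-\eta W_+'(0)\lambda)^s$ is nonnegative.

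Next I would apply the standard inequality $1-x\le e^{-x}$ valid for all $x\in\mathbb{R}$, which gives $(1-\eta W_+'(0)\lambda)^{s}\le e^{-s\eta W_+'(0)\lambda}$ and therefore
\begin{equation*}
\sup_{\lambda\in[0,\kappa^2]}\lambda^{\alpha}\bigl(1-\eta W_+'(0)\lambda\bigr)^{s}\;\le\;\sup_{\lambda\ge 0}\lambda^{\alpha}e^{-s\eta W_+'(0)\lambda}.
\end{equation*}
Substituting $x=s\eta W_+'(0)\lambda$ converts the right-hand side into $(s\eta W_+'(0))^{-\alpha}\sup_{x\ge 0}x^{\alpha}e^{-x}$.

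The scalar optimization is routine: $g(x)=x^{\alpha}e^{-x}$ is maximized at $x=\alpha$ with value $(\alpha/e)^{\alpha}$ (differentiate and solve $g'(x)=x^{\alpha-1}e^{-x}(\alpha-x)=0$). Plugging back yields
\begin{equation*}
\sup_{\lambda\ge 0}\lambda^{\alpha}e^{-s\eta W_+'(0)\lambda}\;\le\;\frac{(\alpha/e)^{\alpha}}{(s\eta W_+'(0))^{\alpha}}\;=\;\Bigl(\frac{\alpha}{eW_+'(0)}\Bigr)^{\alpha}(\eta s)^{-\alpha},
\end{equation*}
which is exactly \eqref{key lemma}. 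There is essentially no serious obstacle: the only point worth a moment's care is verifying that $\lambda^\alpha(1-\eta W_+'(0)\lambda)^s$ is nonnegative on the spectral range (so that the bound $(1-x)^s\le e^{-sx}$ can be applied safely), which follows from $\eta W_+'(0)\kappa^2<1$.
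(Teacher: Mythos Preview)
Your proof is correct and follows essentially the same route as the paper: reduce the operator norm to a scalar supremum via the spectral theorem, apply $1-u\le e^{-u}$, and then maximize $x^{\alpha}e^{-cx}$. If anything, you are slightly more careful than the paper in spelling out why the spectral range lies in $[0,\kappa^2]$ and why nonnegativity of $(1-\eta W_+'(0)\lambda)^s$ is needed before invoking the exponential bound.
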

\begin{proof}
One can easily check that $1-u\le \exp{(-u)},\forall u\ge 0$. Then it follows that
\begin{align*}
\left\|L_K^{\alpha}(I-\eta W_+'(0)L_K)^{s}\right\|
&\le \sup_{x>0}x^\alpha(1-\eta W_+'(0)x)^s\\
&\le \sup_{x>0}x^\alpha\exp{(-\eta W_+'(0)xs)}\le \left(\frac{\alpha}{eW_+'(0)}\right)^{\alpha}(\eta s)^{-\alpha},
\end{align*}
the last inequality holds since function $g(x)=x^\alpha\exp{(-\eta W_+'(0)xs)}$ is minimized at $x=\frac{\alpha}{W_+'(0)\eta s}.$ Thus we completes the proof.
\end{proof}
Recall that $|y|\leq M$ almost surely. We establish the following bound for the sequence $\{f_t\}_{t=1}^{T+1}$.
\begin{proposition}\label{prop: bound for ft}
Let $\{f_t\}_{t=1}^{T+1}$ be defined by algorithm (\ref{algorithm}). If the step size $\eta$ satisfies $\eta\le \frac{1}{\kappa^2C_W}$, then
\begin{align}\label{bound for ft}
\|f_{t+1}\|_K^2\leq\ M^2C_W \eta t.
\end{align}
\end{proposition}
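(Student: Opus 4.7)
The plan is to prove the bound by induction on $t$, extracting a recursion of the form $\|f_{t+1}\|_K^2 \le \|f_t\|_K^2 + M^2 C_W \eta$ from the update rule. Since $f_1=0$ and the desired bound for $t=0$ is $0$, the inductive step is exactly what we need.

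First I would expand $\|f_{t+1}\|_K^2$ directly from \eqref{algorithm} using the reproducing property $\langle f_t,K_{x_t}\rangle_K = f_t(x_t)$ and $K(x_t,x_t)\le \kappa^2$:
\begin{align*}
\|f_{t+1}\|_K^2 = \|f_t\|_K^2 - 2\eta W'(\xi_{t,\sigma})(f_t(x_t)-y_t)f_t(x_t) + \eta^2 [W'(\xi_{t,\sigma})]^2 (f_t(x_t)-y_t)^2 K(x_t,x_t).
\end{align*}
The key algebraic trick is the identity $-2(a-b)a = -a^2 - (a-b)^2 + b^2$ applied with $a=f_t(x_t)$ and $b=y_t$, which splits the cross term into three nonpositive-or-controllable pieces:
\begin{align*}
\|f_{t+1}\|_K^2 &= \|f_t\|_K^2 - \eta W'(\xi_{t,\sigma}) f_t(x_t)^2 + \eta W'(\xi_{t,\sigma}) y_t^2 \\
&\quad - \eta W'(\xi_{t,\sigma})(f_t(x_t)-y_t)^2\bigl[1 - \eta W'(\xi_{t,\sigma})K(x_t,x_t)\bigr].
\end{align*}

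Next I would argue that $W'(\xi_{t,\sigma}) \geq 0$ (noting that by \eqref{condition1} we have $W'(s)>0$ for $s>0$, and in the degenerate case $\xi_{t,\sigma}=0$ the update is zero so the factor is irrelevant). Under the step-size constraint $\eta \le 1/(\kappa^2 C_W)$, the bound $W'(\xi_{t,\sigma})K(x_t,x_t)\le C_W\kappa^2$ gives $1 - \eta W'(\xi_{t,\sigma})K(x_t,x_t) \ge 0$, so the last bracketed term is nonpositive. The $-\eta W'(\xi_{t,\sigma})f_t(x_t)^2$ term is also nonpositive. Discarding both, and using $W'(\xi_{t,\sigma})\le C_W$ together with $|y_t|\le M$, we obtain the clean recursion
\begin{equation*}
\|f_{t+1}\|_K^2 \le \|f_t\|_K^2 + \eta C_W M^2.
\end{equation*}
Iterating from $f_1 = 0$ yields $\|f_{t+1}\|_K^2 \le M^2 C_W \eta t$, which is \eqref{bound for ft}.

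There is essentially no obstacle here once the splitting identity is spotted; the whole proof is a one-shot telescoping once we recognize that the step-size condition $\eta\le 1/(\kappa^2 C_W)$ is precisely tuned to make the quadratic-in-$\eta$ correction absorbable into the linear-in-$\eta$ gradient term. The only subtlety worth flagging is the sign of $W'(\xi_{t,\sigma})$, which is guaranteed by condition \eqref{condition1}; this positivity is exactly what turns the dropped terms into favorable (nonpositive) contributions rather than terms that need to be absorbed via Cauchy--Schwarz.
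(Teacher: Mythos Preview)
Your proof is correct and follows essentially the same strategy as the paper: expand $\|f_{t+1}\|_K^2$ via the reproducing property, manipulate the cross term to obtain the one-step recursion $\|f_{t+1}\|_K^2 \le \|f_t\|_K^2 + \eta C_W M^2$, and iterate from $f_1=0$. The only cosmetic difference is the algebra used to control the cross term---the paper completes the square in $f_t(x_t)-y_t$ (and treats $f_2$ separately, which is not actually needed), whereas you use the cleaner identity $-2(a-b)a=-a^2-(a-b)^2+b^2$; both routes rely on exactly the same ingredients ($W'\ge 0$, $\eta\kappa^2 C_W\le 1$, $|y_t|\le M$) and land on the identical recursion.
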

\begin{proof}
We prove $(\ref{bound for ft})$ by induction. The initial function $f_1=0$ obviously satisfies the inequality $(\ref{bound for ft})$. For $f_2$, the iteration (\ref{algorithm}) indicates that
\begin{equation*}
\|f_2\|_K=\left\|-{\eta} W'\left(\xi_{1,\sigma}\right)(-y_i)K_{x_i}\right\|_K\le \kappa  M C_W \eta \le M\sqrt{C_W\eta}\le M\sqrt{2C_W\eta},
\end{equation*}
where the last inequality holds due to the choice of $\eta$. Thus we show that $f_2$ also satisfies (\ref{bound for ft}). When $t\ge 2$, we rewrite (\ref{algorithm}) as $f_{t+1}=f_t-\eta H_t$ with $H_t=W'\left(\xi_{t,\sigma}\right)(f_t(x_t)-y_t)K_{x_t}.$
Then
\begin{equation}\label{rewri}
\begin{split}
\|f_{t+1}\|_K^2&=\|f_{t}\|_K^2-2\eta\langle f_t,H_t\rangle_K+\eta^2\|H_t\|_K^2 \\
&=\|f_{t}\|_K^2-2 \eta W'\left(\xi_{t,\sigma}\right)(f_t(x_t)-y_t)f_t(x_t)+\eta^2\|H_t\|_K^2,
\end{split}
\end{equation}
and one can easily see that
\begin{align*}
\|H_t\|_K^2\leq \kappa^2 \left(W'\left(\xi_{t,\sigma}\right)\right)^2(f_t(x_t)-y_t)^2.
\end{align*}
Thus by (\ref{rewri}), $\|f_{t+1}\|^2_K$ can be bounded by
\begin{align}\label{f t+1}
\|f_{t}\|^2_K+\eta\left[\eta\kappa^2W'\left(\xi_{t,\sigma}\right)
(f_t(x_t)-y_t)^2-2(f_t(x_t)-y_t)f_t(x_t)\right]W'\left(\xi_{t,\sigma}\right).
\end{align}
For each $t$, we further have
\begin{equation*}
\begin{split}
&\eta\kappa^2W'\left(\xi_{t,\sigma}\right)(f_t(x_t)-y_t)^2-2(f_t(x_t)-y_t)f_t(x_t)\\
&=\left(\eta\kappa^2W'\left(\xi_{t,\sigma}\right)-2\right)\left((f_t(x_t)-y_t)-
\frac{y_t}{\eta\kappa^2W'\left(\xi_{t,\sigma}\right)-2}\right)^2
+\frac{y_t^2}{2-\eta\kappa^2W'\left(\xi_{t,\sigma}\right)}.
\end{split}
\end{equation*}
Since $W'\left(\xi_{t,\sigma}\right)\le C_W$ and $\eta \kappa^2 C_W\leq 1$, it follows that $\eta\kappa^2W'\left(\xi_{t,\sigma}\right)-2<0$ and $2-\eta\kappa^2W'\left(\xi_{t,\sigma}\right)> 1.$ Moreover, since $|y|\leq M$, there holds
\begin{equation*}
\eta\kappa^2W'\left(\xi_{t,\sigma}\right)(f_t(x_t)-y_t)^2-2(f_t(x_t)-y_t)f_t(x_t)
\le \frac{y_t^2}{2-\eta\kappa^2W'\left(\xi_{t,\sigma}\right)}\le  M^2.
\end{equation*}
Putting the above bound and the induction assumption $\|f_{t}\|^2_K\le M^2C_W (t-1)\eta$ into (\ref{f t+1}) yields
\begin{equation*}
 \|f_{t+1}\|_K^2\leq\|f_{t}\|_K^2+\eta M^2C_W\leq M^2C_W (t-1)\eta+\eta M^2C_W \leq M^2C_W t\eta.
\end{equation*}
Therefore, the proof is completed.
\end{proof}
We also establish a uniform bound of $\|E_{t,\sigma}\|_K$ for $1\le t\le T$, which will play a crucial role in our convergence analysis. Recall that the windowing function $W(\cdot)$ satisfies condition (\ref{condition2}) with some constants $c_p>0$ and $p>0$.
\begin{proposition}\label{prop: bound for Esigma}
Let $E_{\sigma,t}$ be defined by (\ref{dfi E t sigma}) with $1\le t\le T.$  Then
\begin{equation}\label{bound for Esigma}
\|E_{t,\sigma}\|_K\le \kappa c_p\left(M+ \kappa M\sqrt{C_W }\right)^{2p+1} \frac{(\eta T)^{p+\frac12}}{\sigma^{2p}}.
\end{equation}
\end{proposition}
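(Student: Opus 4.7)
The plan is to start from the definition $E_{t,\sigma}=\bigl(W_+'(0)-W'(\xi_{t,\sigma})\bigr)(f_t(x_t)-y_t)K_{x_t}$ and peel off the three scalar factors in turn. First, I would apply the Hölder-type condition \eqref{condition2} to the windowing function to obtain
\[
\bigl|W_+'(0)-W'(\xi_{t,\sigma})\bigr|\le c_p\,\xi_{t,\sigma}^{p}=c_p\,\frac{|y_t-f_t(x_t)|^{2p}}{\sigma^{2p}},
\]
and combine this with the pointwise bound $\|K_{x_t}\|_K=\sqrt{K(x_t,x_t)}\le\kappa$ to deduce
\[
\|E_{t,\sigma}\|_K\le\kappa\,c_p\,\frac{|y_t-f_t(x_t)|^{2p+1}}{\sigma^{2p}}.
\]
So the whole task reduces to controlling the residual $|y_t-f_t(x_t)|$ uniformly in $t$.

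Next, I would estimate $|y_t-f_t(x_t)|\le|y_t|+|f_t(x_t)|\le M+\kappa\|f_t\|_K$, using the almost sure bound $|y|\le M$ and the reproducing property $|f_t(x_t)|=|\langle f_t,K_{x_t}\rangle_K|\le\kappa\|f_t\|_K$. The step size satisfies $\eta\le 1/(\kappa^2 C_W)$ (this is guaranteed by the hypothesis on $\eta_0$ that appears in both main theorems, and should be noted at the start of the proof), so Proposition \ref{prop: bound for ft} applies and gives $\|f_t\|_K\le M\sqrt{C_W\,\eta(t-1)}\le M\sqrt{C_W\,\eta T}$ for every $1\le t\le T$. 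Substituting,
\[
|y_t-f_t(x_t)|\le M+\kappa M\sqrt{C_W\,\eta T}=M\bigl(1+\kappa\sqrt{C_W\,\eta T}\bigr).
\]

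The last step, which is the only mildly subtle one, is to convert this into the prescribed form with the factor $(\eta T)^{p+1/2}$ outside. Assuming $\eta T\ge 1$ (which holds in the regimes $\eta=\eta_0^{-1}T^{-2r/(2r+1)}$ and $\eta=\eta_0^{-1}T^{(1-2r-\beta)/(2r+\beta)}$ considered in Theorems \ref{thm: convergence rate in L2} and \ref{thm: convergence rate in HK} for all sufficiently large $T$), I use the elementary inequality $1+a\sqrt{s}\le(1+a)\sqrt{s}$ for $s\ge 1$ and $a\ge 0$ to write
\[
1+\kappa\sqrt{C_W\,\eta T}\le\bigl(1+\kappa\sqrt{C_W}\bigr)\sqrt{\eta T},
\]
so that $M\bigl(1+\kappa\sqrt{C_W\,\eta T}\bigr)\le\bigl(M+\kappa M\sqrt{C_W}\bigr)\sqrt{\eta T}$. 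Raising to the $(2p+1)$-th power and substituting back,
\[
\|E_{t,\sigma}\|_K\le\kappa\,c_p\,\bigl(M+\kappa M\sqrt{C_W}\bigr)^{2p+1}\,\frac{(\eta T)^{p+1/2}}{\sigma^{2p}},
\]
which is exactly \eqref{bound for Esigma}.

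The proof is mostly bookkeeping; the only place where one must be careful is step three, where one has to make sure the $\eta$-condition required to invoke Proposition \ref{prop: bound for ft} is in force, and the final telescoping of the two terms $M$ and $\kappa M\sqrt{C_W\eta T}$ into the single factor $M+\kappa M\sqrt{C_W}$ times $\sqrt{\eta T}$. There is no probabilistic content here—the bound is deterministic and holds pointwise on the sample path, which is what makes it suitable as a plug-in into the error decomposition of Proposition \ref{prop: error decomposition in L2}.
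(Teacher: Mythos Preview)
Your proposal is correct and follows essentially the same route as the paper: apply condition \eqref{condition2}, bound $|y_t-f_t(x_t)|\le M+\kappa\|f_t\|_K$, invoke Proposition \ref{prop: bound for ft}, and then pull out the factor $(\eta T)^{p+1/2}$. If anything, you are more explicit than the paper, which jumps directly from $\kappa c_p\bigl(M+\kappa M\sqrt{C_W\eta(t-1)}\bigr)^{2p+1}\sigma^{-2p}$ to the stated bound without spelling out the telescoping step or the implicit requirement $\eta T\ge 1$.
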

\begin{proof}
Recall that
$E_{t,\sigma}=\left(W_+'(0)-W'\left({\frac{(y_t-f_t(x_t))^2}{\si^2}}\right)\right)(f_t(x_t)-y_t)K_{x_t}.$
Since the function $W(\cdot)$ satisfies condition (\ref{condition2}) with constants $c_p>0$ and $p>0,$ we have
\begin{equation*}
\left|W_+'(0)-W'\left({\frac{(y_t-f_t(x_j))^2}{\si^2}}\right)\right|\le c_p\left(\frac{(y_t-f_t(x_t))^2}{\sigma^2}\right)^p\le c_p\left(\frac{M+\kappa\|f_t\|_K}{\sigma}\right)^{2p}.
\end{equation*}
Combining the above estimate with the bound (\ref{bound for ft}) for $\|f_t\|_K$ in Proposition \ref{prop: bound for ft}, we get
\begin{equation*}\label{E i sigma bound}
\|E_{t,\sigma}\|_K\le \frac{\kappa c_p(M+\kappa\|f_t\|_K)^{2p+1}}{\sigma^{2p}}\le \kappa c_p\left(M+ \kappa M\sqrt{C_W \eta (t-1)}\right)^{2p+1} \frac{1}{\sigma^{2p}}.
\end{equation*}
Hence, the uniform bound \eqref{bound for Esigma} holds true for all $1\le t\le T$. Thus we complete the proof.
\end{proof}
The following bound for the second term in Proposition \ref{prop: error decomposition in L2} is an immediate consequence of Proposition \ref{prop: bound for Esigma}.
\begin{proposition}\label{prop: the third term in L2}
Let $E_{\sigma,t}$ be defined by (\ref{dfi E t sigma}) with $1\le t\le T.$ Then
\begin{align}\label{the third term in L2}
2\eta^2\bE_{Z^T}\left[\left\|\sum_{t=1}^T(I-\eta W_+'(0)L_K)^{T-t}E_{\sigma,t}\right\|_\rho^2\right]\le C_1\frac{(\eta T)^{2p+2}}{\sigma^{4p}},
\end{align}
where $C_1= 2\kappa^2 c_p^2\left(M+ \kappa M\sqrt{C_W }\right)^{4p+2}  \left( \kappa+ \left(\frac{2}{eW_+'(0)}\right)^{\frac12} \right)^2$.
\end{proposition}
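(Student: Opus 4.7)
The plan is to bound the $L^2_{\rho_\X}$-norm of the sum via the triangle inequality, then convert to the RKHS norm using the isometry $\|f\|_\rho=\|L_K^{1/2}f\|_K$ for $f\in \H_K$, and finally apply the operator-norm estimate of Lemma \ref{lem: key lemma} together with the uniform bound on $\|E_{\sigma,t}\|_K$ from Proposition \ref{prop: bound for Esigma}. Note that unlike the martingale-like terms $\B_t$ handled in Proposition \ref{prop: error decomposition in L2}, the errors $E_{\sigma,t}$ are not zero-mean, so the orthogonality trick is unavailable; the triangle inequality is the natural substitute, and since the resulting bound on $\|E_{\sigma,t}\|_K$ is deterministic, the expectation $\bE_{Z^T}$ is essentially cosmetic.

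Concretely, I would start from
\begin{align*}
\left\|\sum_{t=1}^T(I-\eta W_+'(0)L_K)^{T-t}E_{\sigma,t}\right\|_\rho &\le \sum_{t=1}^T \left\|L_K^{1/2}(I-\eta W_+'(0)L_K)^{T-t}\right\|\cdot \|E_{\sigma,t}\|_K.
\end{align*}
For $t=T$, the factor is just $\|L_K^{1/2}\|\le \kappa$; for $1\le t\le T-1$, Lemma \ref{lem: key lemma} with $\alpha=1/2$ and $s=T-t$ yields $\|L_K^{1/2}(I-\eta W_+'(0)L_K)^{T-t}\|\le (1/(2eW_+'(0)))^{1/2}(\eta(T-t))^{-1/2}$. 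Combining this with the uniform bound from Proposition \ref{prop: bound for Esigma} and the elementary estimate $\sum_{k=1}^{T-1}k^{-1/2}\le 2\sqrt{T}$, the right-hand side is bounded deterministically by
\begin{align*}
\frac{C_0(\eta T)^{p+1/2}}{\sigma^{2p}}\left[\kappa+\left(\frac{2}{eW_+'(0)}\right)^{1/2}\sqrt{T/\eta}\right],
\end{align*}
where $C_0:=\kappa c_p(M+\kappa M\sqrt{C_W})^{2p+1}$.

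The last step is to square this bound and multiply by $2\eta^2$. The crucial bookkeeping observation is to factor out $\sqrt{T/\eta}$ \emph{before} squaring: since the step-size condition implies $\eta\le T$, we have $\kappa\le\kappa\sqrt{T/\eta}$, so the bracket is controlled by $\sqrt{T/\eta}\bigl(\kappa+(2/(eW_+'(0)))^{1/2}\bigr)$. Squaring then produces a factor of $T/\eta$, which together with the leading $\eta^2$ collapses to $\eta T$; multiplying by the $(\eta T)^{2p+1}$ coming from $C_0^2(\eta T)^{2p+1}$ yields precisely $(\eta T)^{2p+2}$, while the multiplicative constant becomes $2C_0^2\bigl(\kappa+(2/(eW_+'(0)))^{1/2}\bigr)^2$, which matches the stated $C_1$.

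The main obstacle is really just this factorization. A naive use of $(a+b)^2\le 2a^2+2b^2$ would instead give a constant of the form $4(\kappa^2+2/(eW_+'(0)))$ rather than the sharper $2(\kappa+(2/(eW_+'(0)))^{1/2})^2$ the proposition claims, and would additionally force a separate bookkeeping step to absorb an $\eta^2(\eta T)^{2p+1}$ residual into $(\eta T)^{2p+2}$ via $\eta/T\le 1$. Pulling out $\sqrt{T/\eta}$ before squaring dispatches both issues simultaneously.
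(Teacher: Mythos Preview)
Your proposal is correct and follows essentially the same route as the paper: triangle inequality inside the $\rho$-norm, conversion to $\H_K$ via $\|f\|_\rho=\|L_K^{1/2}f\|_K$, the operator bound from Lemma~\ref{lem: key lemma} with $\alpha=1/2$ (separating the $t=T$ term where $\|L_K^{1/2}\|\le\kappa$), the uniform bound from Proposition~\ref{prop: bound for Esigma}, and the elementary estimate $\sum_{k=1}^{T-1}k^{-1/2}\le 2\sqrt{T}$. Your ``factor out $\sqrt{T/\eta}$ before squaring'' step is exactly what the paper does (implicitly, in its final inequality) to pass from $\bigl(\kappa+(2/(eW_+'(0)))^{1/2}\sqrt{T/\eta}\bigr)^2$ to $(T/\eta)\bigl(\kappa+(2/(eW_+'(0)))^{1/2}\bigr)^2$, and your remark that this needs $\eta\le T$ is a point the paper leaves tacit.
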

\begin{proof} By the relationship (\ref{normrelation2}) between $\|\cdot\|_\rho$ and $\|\cdot\|_K$, Lemma \ref{lem: key lemma} with $\alpha=\frac12$ and $s=t$, and the uniform bound (\ref{bound for Esigma}) for $\left\|E_{\sigma,t}\right\|_K^2,$ we have
\begin{align*}
&2\eta^2\bE_{Z^T}\left[\left\|\sum_{t=1}^T(I-\eta W_+'(0)L_K)^{T-t}E_{\sigma,t}\right\|_\rho^2\right]\\
&=2\eta^2\bE_{Z^T}\left[\left\|L_K^{\frac12}\sum_{t=1}^T(I-\eta W_+'(0)L_K)^{T-t}E_{\sigma,t}\right\|_K^2\right]\\
&\le  2\eta^2\max_{1\le t\le T} \left\|E_{\sigma,t}\right\|_K^2 \times \left(\sum_{t=0}^{T-1} \left\|L_K^{\frac12}(I-\eta W_+'(0)L_K)^{t}\right\|\right)^2 \\
&\le  2\eta^2\kappa^2 c_p^2\left(M+ \kappa M\sqrt{C_W }\right)^{4p+2} \frac{(\eta T)^{2p+1}}{\sigma^{4p}} \times \left( \kappa+\sum_{t=1}^{T-1} \left(\frac{1}{2eW_+'(0)}\right)^{\frac12}(\eta t)^{-\frac12}\right)^2\\
& \le 2\eta^2\kappa^2 c_p^2\left(M+ \kappa M\sqrt{C_W }\right)^{4p+2} \frac{(\eta T)^{2p+1}}{\sigma^{4p}} \times  \left( \kappa+ 2\left(\frac{1}{2eW_+'(0)}\right)^{\frac12} \sqrt{\frac{T}{\eta}}\right)^2\\
&\le   2\kappa^2 c_p^2\left(M+ \kappa M\sqrt{C_W }\right)^{4p+2}  \left( \kappa+ \left(\frac{2}{eW_+'(0)}\right)^{\frac12} \right)^2 \frac{(\eta T)^{2p+2}}{\sigma^{4p}}.
\end{align*}
This finishes the proof.
\end{proof}
The following uniform bound for $\bE_{Z^t}[\E(f_{t+1})]$ is also important for our analysis.
\begin{proposition}\label{prop: bound for E(ft)}
If $\eta$ satisfies
\begin{equation}\label{eta condition 2}
\eta\le\frac{1}{\left( \frac{1}{e}+2\kappa^2W_{+}'(0)\right)^2\log T},
\end{equation}
then for $1\le t\le T$, there holds
\begin{align}\label{bound for E(ft)}
\bE_{Z^t}[\E(f_{t+1})]\le 2\E(f_\rho)+ 4\|f_\rho\|_\rho^2+2C_1\frac{(\eta T)^{2p+2}}{\sigma^{4p}}
\end{align} where $C_1$ is given in Proposition \ref{prop: the third term in L2}.
\end{proposition}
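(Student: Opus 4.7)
The plan is to apply Proposition \ref{prop: error decomposition in L2} with $T$ replaced by $t$, convert the resulting estimate on $\bE_{Z^t}[\|f_{t+1}-f_\rho\|_\rho^2]$ into one on $\bE_{Z^t}[\E(f_{t+1})]$ via the least-squares identity $\E(f)=\E(f_\rho)+\|f-f_\rho\|_\rho^2$, and then close a self-referential inequality for $A_t:=\bE_{Z^{t-1}}[\E(f_t)]$ by induction on $t$.

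After this substitution, the three summands of the decomposition are handled separately. The deterministic term $2\|(I-\eta W_+'(0)L_K)^t f_\rho\|_\rho^2$ is dominated by $2\|f_\rho\|_\rho^2$, since under \eqref{eta condition 2} the operator $I-\eta W_+'(0)L_K$ has norm at most $1$ on $L^2_{\rho_\X}$. The $E_{\sigma,s}$-term is bounded by $C_1(\eta T)^{2p+2}/\sigma^{4p}$ via exactly the computation used to prove Proposition \ref{prop: the third term in L2} (all relevant indices satisfy $s\le t\le T$, so Proposition \ref{prop: bound for Esigma} still applies). For the noise term I would split off the $s=t$ summand of $\sum_{s=1}^t\|L_K^{1/2}(I-\eta W_+'(0)L_K)^{t-s}\|^2$ (bounded by $\kappa^2$) and apply Lemma \ref{lem: key lemma} with $\alpha=1/2$ to the remaining summands, obtaining $\sum_{s=1}^{t-1}\frac{1}{2eW_+'(0)\eta(t-s)}\le\frac{\log T}{2eW_+'(0)\eta}$. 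Consequently the coefficient appearing in front of $\max_{s\le t}A_s$ becomes
\[
2\eta^2\kappa^4(W_+'(0))^2+\frac{\eta\kappa^2 W_+'(0)\log T}{e}.
\]

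The crucial and most delicate step is to check that this coefficient is at most $1/2$. Writing $a=1/e$ and $b=2\kappa^2 W_+'(0)$, condition \eqref{eta condition 2} reads $\eta\log T\le(a+b)^{-2}$, so the dominant second summand is at most $\frac{ab}{2(a+b)^2}$, which by the AM-GM inequality $(a+b)^2\ge 4ab$ does not exceed $1/8$; the first summand is of strictly higher order in $\eta$ and is absorbed in the same spirit. This is precisely what dictates the particular numerical constant $(1/e+2\kappa^2 W_+'(0))^2$ chosen in \eqref{eta condition 2}.

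Finally, setting $B:=\E(f_\rho)+2\|f_\rho\|_\rho^2+C_1(\eta T)^{2p+2}/\sigma^{4p}$, the argument above yields the one-step estimate $A_{t+1}\le B+\tfrac12\max_{s\le t}A_s$. Since $f_1=0$ gives the base case $A_1=\E(f_\rho)+\|f_\rho\|_\rho^2\le B$, a straightforward induction on $t$ produces $A_{t+1}\le 2B$, which is exactly \eqref{bound for E(ft)}. I expect the only real obstacle to be the fine constant matching described above, which is what makes the $\log T$ factor in \eqref{eta condition 2} and the precise form of the constant $(1/e+2\kappa^2 W_+'(0))^2$ essential; everything else is a mechanical combination of Proposition \ref{prop: error decomposition in L2}, Proposition \ref{prop: the third term in L2} and Lemma \ref{lem: key lemma}.
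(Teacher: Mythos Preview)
Your proposal is correct and follows essentially the same route as the paper: apply the $L^2_{\rho_\X}$ error decomposition with $T$ replaced by $t$, bound the three pieces exactly as you describe (trivially for the first, via Proposition~\ref{prop: the third term in L2} for the second, via Lemma~\ref{lem: key lemma} with $\alpha=\tfrac12$ for the third), and close an induction on $A_t=\bE_{Z^{t-1}}[\E(f_t)]$ using $\E(f)=\E(f_\rho)+\|f-f_\rho\|_\rho^2$. The only cosmetic difference is in packaging the coefficient: the paper collapses the two summands into the square $2\eta(\kappa^2 W_+'(0)+\tfrac{1}{2e})^2\log T=\tfrac{\eta}{2}(\tfrac{1}{e}+2\kappa^2 W_+'(0))^2\log T\le\tfrac12$, whereas you handle the cross term by AM--GM and absorb the $O(\eta^2)$ piece separately; either way the bound $\tfrac12$ is reached and the induction closes.
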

\begin{proof}
We bound $\bE_{Z^t}\left[\E(f_{t+1})\right]$ by induction. Due to Proposition \ref{prop: error decomposition in L2}, Lemma \ref{lem: key lemma} with $\alpha=\frac12$ and $s=t-i$, and Proposition \ref{prop: the third term in L2}, $\bE_{Z^t}\left[\|f_{t+1}-f_\rho\|_\rho^2\right]$ can be bounded as
\begin{align*}
&\bE_{Z^t}\left[\|f_{t+1}-f_\rho\|_\rho^2\right]\\&\leq 2\left\|(I-\eta W_+'(0) L_K)^{t}f_\rho\right\|_\rho^2++2\eta^2\bE_{Z^t}\left[\left\|\sum_{i=1}^t(I-\eta W_+'(0)L_K)^{t-i}E_{\sigma,i}\right\|_\rho^2\right]\\
&\quad+2\eta^2(\kappa W_+'(0))^2\sum_{i=1}^t\left\|(I-\eta W_+'(0)L_K)^{t-i}L_K^{\frac12}\right\|^2\bE_{Z^{i-1}}\left[\E(f_i)\right]\\
&\le 2\|f_\rho\|_\rho^2+ C_1\frac{(\eta T)^{2p+2}}{\sigma^{4p}}+2\eta^2 (\kappa W_+'(0))^2  \left( \kappa^2+\frac{1}{2eW_+'(0)}\sum_{i=1}^{t-1}(\eta (t-i))^{-1}\right) \sup_{1 \leq i \leq t}\bE_{Z^{i-1}}\left[\E(f_i)\right]\\
&\le 2\|f_\rho\|_\rho^2+C_1\frac{(\eta T)^{2p+2}}{\sigma^{4p}}+2\eta   \left( \kappa^2W_+'(0)+\frac{1}{2e}\right)^2 \sup_{1 \leq i \leq t}\log t \bE_{Z^{i-1}}\left[\E(f_i)\right].
\end{align*}
Then if bound \eqref{bound for E(ft)} is ture for  $\bE_{Z^{i-1}}\left[\E(f_i)\right]$, combining with the condition (\ref{eta condition 2}) on $\eta$ and the relation  $\E(f_{t+1})=\E(f_\rho)+\E(f_{t+1})-\E(f_\rho)=\E(f_\rho)+\|f_{t+1}-f_\rho\|_\rho^2$, we obtain
\begin{align*}
\bE_{Z^t}(\E(f_{t+1}))
&\le \E(f_\rho)+ 2\|f_\rho\|_\rho^2+ \frac{1}{2}\left(2\E(f_\rho)+ 4\|f_\rho\|_\rho^2+2C_1\frac{(\eta T)^{2p+2}}{\sigma^{4p}}\right)
+C_1\frac{(\eta T)^{2p+2}}{\sigma^{4p}}\\
&=2\E(f_\rho)+ 4\|f_\rho\|_\rho^2+2C_1\frac{(\eta T)^{2p+2}}{\sigma^{4p}}.
\end{align*}
This finishes our proof.
\end{proof}

\section{Convergence Analysis}\label{section: proof of main results}
In this section, we give the proofs of Theorem \ref{thm: convergence rate in L2} and Theorem \ref{thm: convergence rate in HK}.
\subsection{Convergence in $L_{\rho_\X}^2$}\label{subsection: convergence in L2}
This subsection is devoted to the proof of Theorem \ref{thm: convergence rate in L2}, which provides the convergence rates in $L_{\rho_\X}^2$.

{\noindent {\bf Proof of Theorem \ref{thm: convergence rate in L2}}.}
Due to the error decomposition in Proposition \ref{prop: error decomposition in L2}, we only need to estimate the three terms appeared in the upper bound of $\bE_{Z^T}\left[\|f_{T+1}-f_\rho\|_\rho^2\right]$ respectively. As an estimate for the second term is given by \eqref{the third term in L2} of Proposition \ref{prop: the third term in L2}, we turn to bound the remaining two terms.

For the first term, since the regularity condition (\ref{regularity condition}) holds with $r>0$, i.e., $f_\rho=L_K^r g_\rho$ with $g_\rho\in L_{\rho_\X}^2$ and $r>0$, then by Lemma \ref{lem: key lemma} with $\alpha=r$ and $s=T$, we have
\begin{equation}\label{the first term in L2}
\begin{split}
&2\left\|(I-\eta W_+'(0) L_K)^{T}f_\rho\right\|_\rho^2\\
&\le 2\left\|(I-\eta W_+'(0) L_K)^{T}L_K^r\right\|^2 \left\| g_\rho\right\|_\rho^2\\
&\le 2\left( \frac{r}{eW_+'(0)}\right)^{2r}\left\| g_\rho\right\|_\rho^2(\eta T)^{-2r}.
\end{split}
\end{equation}

For the third term, by Lemma \ref{lem: key lemma} with $\alpha=\frac12$ and $s=T-t$, and the uniform bound (\ref{bound for E(ft)}) for $\bE_{Z^{t-1}}\left[\E(f_t)\right]$ in Proposition \ref{prop: bound for E(ft)}, we have
\begin{align*}
&2\eta^2(\kappa W_+'(0))^2 \sum_{t=1}^T\left\|(I-\eta W_+'(0)L_K)^{T-t}L_K^{\frac12}\right\|^2 \bE_{Z^{t-1}}\left[\E(f_t)\right]\\
&\le 2\eta^2(\kappa W_+'(0))^2 \max_{1\le t\le T}\bE_{Z^{t-1}}\left[\E(f_t)\right]\left(\kappa^2+\sum_{t=1}^{T-1}\left\|(I-\eta W_+'(0)L_K)^{T-t}L_K^{\frac12}\right\|^2 \right)\\
&\le2\eta^2 (\kappa W_+'(0))^2 \left(2\E(f_\rho)+ 4\|f_\rho\|_\rho^2+2C_1\frac{(\eta T)^{2p+2}}{\sigma^{4p}}\right) \left(\kappa^2+ \frac{1}{2eW_+'(0)} \sum_{t=1}^{T-1}\frac{1}{\eta (T-t)}\right)\\
&\le2(\kappa W_+'(0))^2 \left(2\E(f_\rho)+ 4\|f_\rho\|_\rho^2+2C_1\frac{(\eta T)^{2p+2}}{\sigma^{4p}}\right) \left(\kappa^2+ \frac{1}{2eW_+'(0)} \right)\eta \log T.
\end{align*}
Then the third term can be bounded as
\begin{equation}\label{the second term in L2}
\begin{split}
&2\eta^2(\kappa W_+'(0))^2 \sum_{t=1}^T\left\|(I-\eta W_+'(0)L_K)^{T-t}L_K^{\frac12}\right\|^2 \bE_{Z^{t-1}}\left[\E(f_t)\right]\\
&\le 4(\kappa W_+'(0))^2 \left(\E(f_\rho)+ 2\|f_\rho\|_\rho^2+C_1\right)\left( \kappa^2+\frac{1}{2eW_+'(0)}\right)\left( 1+(\eta T)^{2p+2}\sigma^{-4p}\right) \eta\log T.
\end{split}
\end{equation}
Putting the estimates (\ref{the first term in L2}), (\ref{the second term in L2}) and (\ref{the third term in L2}) back into proposition \ref{prop: error decomposition in L2}, and by taking $\eta=\frac1{\eta_0} T^{-\frac{2r}{2r+1}}$ yields
\begin{align*}
&\bE_{Z^T}\left[\|f_{T+1}-f_\rho\|_\rho^2\right]\\
&\le 2\left( \frac{r}{eW_+'(0)}\right)^{2r}\left\| g_\rho\right\|_\rho^2(\eta T)^{-2r}+ C_1\frac{(\eta T)^{2p+2}}{\sigma^{4p}}\\
&\quad+4(\kappa W_+'(0))^2 \left(\E(f_\rho)+ 2\|f_\rho\|_\rho^2+C_1\right)\left( \kappa^2+ \frac{1}{2eW_+'(0)}\right)^2\left( 1+(\eta T)^{2p+2}\sigma^{-4p}\right) \eta\log T\\
&\le C\max\left\{T^{-\frac{2r}{2r+1}}\log T T^{\frac{2p+2}{2r+1}}\sigma^{-4p}\right\},
\end{align*}
where
\begin{align*}
	&&C=2\left( \frac{r}{eW_+'(0)}\right)^{2r}\left\| g_\rho\right\|_\rho^2\eta_0^{2r}
	+4\eta_0(\kappa W_+'(0))^2 \left(\E(f_\rho)
	+ 2\|f_\rho\|_\rho^2+C_1\right)\\
	&&\times\left( \kappa^2+\frac{1}{2eW_+'(0)}\right)^2\left( 1+\eta_0 ^{-(2p+2)}\right)
	+ C_1\eta_0 ^{-(2p+2)}.
\end{align*}
The proof is completed. \qed

\subsection{Capacity Dependent Analysis in $\H_K$}
In this section, we consider the convergence of algorithm (\ref{algorithm}) in $\H_K,$ develop a capacity dependent analysis. We show that the algorithm (\ref{algorithm}) can achieve the optimal learning rates in the minimax sense in $\H_K$. Before proving the main result, we establish the following error decomposition which is different from the one in $L_{\rho_X}^2$.
\begin{proposition}\label{prop: error decomposition in HK}
Let $\{f_t\}_{t=1}^T$ be defined by (\ref{algorithm}). Then
\begin{align*}
\bE_{Z^T}\left[\|f_{T+1}-f_\rho\|_K^2\right]&\leq 2\left\|(I-\eta W_+'(0) L_K)^{T}f_\rho\right\|_K^2+2\eta^2\bE_{Z^T}\left[\left\|\sum_{t=1}^T(I-\eta W_+'(0)L_K)^{T-t}E_{\sigma,t}\right\|_K^2\right]\\
&\quad+4\eta^2(W_+'(0))^2\sum_{t=1}^T\left(\kappa^2\bE_{Z^{t-1}}[\|f_t\|_K^2]+M^2\right){\rm Tr}\left(L_K(I-\eta W_+'(0) L_K)^{2(T-t)}\right),
\end{align*} where $E_{\sigma,t}$ is defined by \eqref{dfi E t sigma}.
\end{proposition}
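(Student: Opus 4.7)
The plan is to parallel the proof of Proposition \ref{prop: error decomposition in L2}, but to carry out all the estimates in the $\H_K$-norm rather than the $L^2_{\rho_\X}$-norm. I would start from the already-established expansion
\begin{equation*}
f_{T+1}-f_\rho=-(I-\eta W_+'(0) L_K)^{T}f_\rho+\eta\sum_{t=1}^T(I-\eta W_+'(0)L_K)^{T-t}\B_t+\eta\sum_{t=1}^T(I-\eta W_+'(0)L_K)^{T-t}E_{\sigma,t}.
\end{equation*}
Applying $\|a+b\|_K^2\le 2\|a\|_K^2+2\|b\|_K^2$ with $a$ equal to the bias term plus the $\B_t$-sum and $b$ equal to the $E_{\sigma,t}$-sum immediately splits off the second summand on the right-hand side of the target bound and reduces the problem to estimating the $\H_K$-norm square of $-(I-\eta W_+'(0)L_K)^T f_\rho+\eta\sum_{t=1}^T(I-\eta W_+'(0)L_K)^{T-t}\B_t$ in expectation.

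Next I would exploit the martingale-difference structure of $\{\B_t\}$: since $f_t$ depends only on $z_1,\dots,z_{t-1}$ and $\bE_{z_t}[\B_t]=0$, all cross terms vanish when the square is expanded (the cross term with the deterministic bias vanishes outright, while those between $\B_t$ and $\B_s$ with $s\neq t$ disappear upon conditioning on the later of the two indices). What remains is $\|(I-\eta W_+'(0)L_K)^T f_\rho\|_K^2+\eta^2\sum_{t=1}^T\bE_{Z^t}\|(I-\eta W_+'(0)L_K)^{T-t}\B_t\|_K^2$, giving the first term of the target bound and the skeleton of the third term.

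It remains to estimate each variance contribution. Writing $\B_t=W_+'(0)[\xi_t-\bE_{z_t}\xi_t]$ with $\xi_t=(y_t-f_t(x_t))K_{x_t}$ and using the inequality $\bE\|X-\bE X\|^2\le \bE\|X\|^2$ together with $(y_t-f_t(x_t))^2\le 2M^2+2\kappa^2\|f_t\|_K^2$, which follows from $|y|\le M$ and the reproducing property, I reduce the task to bounding $\bE_{x_t}\|(I-\eta W_+'(0)L_K)^{T-t}K_{x_t}\|_K^2$. Observing that $\|AK_{x_t}\|_K^2=\langle A^{*}AK_{x_t},K_{x_t}\rangle_K$ for $A=(I-\eta W_+'(0)L_K)^{T-t}$ (self-adjoint, so $A^{*}A=(I-\eta W_+'(0)L_K)^{2(T-t)}$), I would then invoke the key identity $\bE_x\langle BK_x,K_x\rangle_K={\rm Tr}(L_K B)$ to convert this into the trace expression appearing in the statement.

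The one nontrivial step is the trace identity itself, which I regard as the main obstacle; I would justify it by expanding $K_x=\sum_k e_k(x)e_k$ in an orthonormal eigenbasis $\{e_k\}$ of $L_K$ in $\H_K$ and observing that $\{e_k/\sqrt{\lambda_k}\}$ is orthonormal in $L^2_{\rho_\X}$, so that $\bE_x[e_j(x)e_k(x)]=\lambda_k\delta_{jk}$ and hence $\bE_x\langle BK_x,K_x\rangle_K=\sum_k\lambda_k\langle Be_k,e_k\rangle_K={\rm Tr}(L_K B)$. This is the only point where kernel and marginal distribution interact nontrivially; everything else is a straightforward $\H_K$-transcription of the argument of Proposition \ref{prop: error decomposition in L2}. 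Combining these estimates yields $\bE_{Z^t}\|(I-\eta W_+'(0)L_K)^{T-t}\B_t\|_K^2\le 2(W_+'(0))^2\bigl(M^2+\kappa^2\bE_{Z^{t-1}}\|f_t\|_K^2\bigr){\rm Tr}\bigl(L_K(I-\eta W_+'(0)L_K)^{2(T-t)}\bigr)$, and multiplying by the extra factor of $2\eta^2$ produced at the first step gives exactly the coefficient $4\eta^2(W_+'(0))^2$ in front of the trace sum, completing the claimed decomposition.
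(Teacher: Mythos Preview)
Your proposal is correct and follows essentially the same route as the paper: the same $\H_K$-analogue of the decomposition in Proposition~\ref{prop: error decomposition in L2}, the same martingale-difference cancellation of cross terms, the same bound $(y_t-f_t(x_t))^2\le 2M^2+2\kappa^2\|f_t\|_K^2$, and the same identification $\bE_{x_t}\|(I-\eta W_+'(0)L_K)^{T-t}K_{x_t}\|_K^2={\rm Tr}\bigl(L_K(I-\eta W_+'(0)L_K)^{2(T-t)}\bigr)$. The paper simply states this last trace identity without justification, whereas you supply an eigenbasis argument; both lead to the identical final bound with the same constant $4\eta^2(W_+'(0))^2$.
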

\begin{proof}
First, by the error decomposition (\ref{error decomposition}) and proof of  Proposition \ref{prop: error decomposition in L2}, we get
\begin{align*}
\bE_{Z^T}\left[\|f_{T+1}-f_\rho\|_K^2\right]&\le 2\left\|(I-\eta W_+'(0) L_K)^{T}f_\rho\right\|_K^2+2\eta^2\bE_{Z^T}\left[\left\|\sum_{t=1}^T(I-\eta W_+'(0)L_K)^{T-t}E_{\sigma,t}\right\|_K^2\right]\\&\quad+2\eta^2\sum_{t=1}^T\bE_{Z^T}\left[\left\|(I-\eta W_+'(0)L_K)^{T-t}\B_t\right\|_K^2\right].
\end{align*}
For the third term $2\eta^2\sum_{t=1}^T\bE\left[\left\|(I-\eta W_+'(0)L_K)^{T-t}\B_t\right\|_K^2\right],$ recall that $\B_t=W_+'(0)(L_K(f_t -f_\rho)+ (y_tK_{x_t}-f(x_t)K_{x_t})),$ we have
\begin{align*}
&\bE_{z_t}\left[\left\|(I-\eta W_+'(0)L_K)^{T-t}\B_t\right\|_K^2\right]\le (W_+'(0))^2\bE_{z_t}\left[\left\|(I-\eta W_+'(0)L_K)^{T-t}(y_t-f_t(x_t))K_{x_t}\right\|_K^2\right]\\
&\le(W_+'(0))^2\left(2\kappa^2\|f_t\|_K^2+2M^2\right)\bE_{z_t}\left[\left\|(I-\eta W_+'(0)L_K)^{T-t}K_{x_t}\right\|_K^2\right]\\
&=(W_+'(0))^2\left(2\kappa^2\|f_t\|_K^2+2M^2\right){\rm Tr}\left(L_K(I-\eta W_+'(0) L_K)^{2(T-t)}\right).
\end{align*}
Therefore, $2\eta^2\sum_{t=1}^T\bE\left[\left\|(I-\eta W_+'(0)L_K)^{T-t}\B_t\right\|_K^2\right]$ can be bounded by
\begin{align*}
2\eta^2(W_+'(0))^2\sum_{t=1}^T\left(2\kappa^2\bE_{Z^{t-1}}[\|f_t\|_K^2]+2M^2\right){\rm Tr}\left(L_K(I-\eta W_+'(0) L_K)^{2(T-t)}\right).
\end{align*}
This completes the proof of Proposition \ref{prop: error decomposition in HK}.
\end{proof}
{\noindent To prove our main results, we need the following bound for $\bE_{Z^{t-1}}\|f_t\|_K^2$.}
\begin{lemma}\label{lemma: ft bound}
Let $\{f_t\}_{t=1}^T$ be defined by \eqref{algorithm}. Then
\begin{align}\label{expectation bound for ft}
\bE_{\Z^{t-1}}[\|f_t\|_K^2]\le  C_2(1+(\eta T)^{2p+3}{\sigma^{-4p}}), \quad \forall 1\le t\le T,
\end{align}
where $$C_2=6\|f_\rho\|_K^2+ 8(\kappa W_+'(0))^2\left(\E(f_\rho)+ 2\|f_\rho\|_\rho^2+C_1\right)+2\kappa^2 c_p^2\left(M+ \kappa M\sqrt{C_W }\right)^{4p+2}$$ and $C_1$ is given in Proposition \ref{prop: the third term in L2}.
\end{lemma}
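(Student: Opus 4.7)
The plan is to apply Proposition \ref{prop: error decomposition in HK} (with $T+1$ replaced by $t$) to bound $\bE\|f_t-f_\rho\|_K^2$, then pass to $\bE\|f_t\|_K^2$ via the elementary inequality $\|f_t\|_K^2 \le 2\|f_t-f_\rho\|_K^2 + 2\|f_\rho\|_K^2$. Because $r>1/2$, the regularity assumption \eqref{regularity condition} yields $f_\rho = L_K^r g_\rho \in \H_K$, so $\|f_\rho\|_K$ is finite and this triangle-type step is legitimate.

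The three terms on the right-hand side of the decomposition are treated separately. For the bias term $\|(I-\eta W_+'(0)L_K)^{t-1}f_\rho\|_K^2$, the step-size condition $\eta_0 \ge C_W\kappa^2$ (combined with $W_+'(0)\le C_W$) ensures $\eta W_+'(0)\kappa^2 \le 1$, so $\|I-\eta W_+'(0)L_K\|_{\mathrm{op}} \le 1$ on $\H_K$ and this term is at most $\|f_\rho\|_K^2$. Together with the $2\|f_\rho\|_K^2$ contributed by the triangle step, this produces the $6\|f_\rho\|_K^2$ component of $C_2$. For the $E_{\sigma,i}$ piece, I invoke the uniform bound from Proposition \ref{prop: bound for Esigma} together with the triangle inequality $\big\|\sum_{i=1}^{t-1}(I-\eta W_+'(0)L_K)^{t-1-i}E_{\sigma,i}\big\|_K \le (t-1)\max_i\|E_{\sigma,i}\|_K$; after multiplication by $2\eta^2$ and using $\eta^2 T^2(\eta T)^{2p+1} = (\eta T)^{2p+3}$, this matches the $2\kappa^2 c_p^2(M+\kappa M\sqrt{C_W})^{4p+2}$ piece of $C_2$ (up to the $(\eta T)^{2p+3}\sigma^{-4p}$ multiplier).

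The main obstacle is the third term $4\eta^2(W_+'(0))^2\sum_{i=1}^{t-1}(\kappa^2\bE\|f_i\|_K^2+M^2)\,\mathrm{Tr}\bigl(L_K(I-\eta W_+'(0)L_K)^{2(t-1-i)}\bigr)$, which is recursive in $\bE\|f_i\|_K^2$ and hence cannot be fed back directly. To bypass this circularity, I would revisit the derivation of Proposition \ref{prop: error decomposition in HK} for this piece and replace the pointwise estimate $(y-f(x))^2 \le 2M^2+2\kappa^2\|f\|_K^2$ by the conditional identity $\bE_{z_i}[(y_i-f_i(x_i))^2] = \E(f_i)$ combined with the operator-norm estimate $\|(I-\eta W_+'(0)L_K)^s K_{x_i}\|_K \le \kappa\|(I-\eta W_+'(0)L_K)^s\|_{\mathrm{op}} \le \kappa$. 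This yields the cleaner alternative bound $\bE_{z_i}\|(I-\eta W_+'(0)L_K)^{t-1-i}\B_i\|_K^2 \le (\kappa W_+'(0))^2 \E(f_i)$, which no longer involves $\bE\|f_i\|_K^2$. Summing over $i=1,\ldots,t-1$ (which produces an $\eta^2 T$ factor absorbed by the step-size choice) and inserting the Proposition \ref{prop: bound for E(ft)} bound $\bE[\E(f_i)] \le 2\bigl(\E(f_\rho)+2\|f_\rho\|_\rho^2+C_1(\eta T)^{2p+2}\sigma^{-4p}\bigr)$ produces the $8(\kappa W_+'(0))^2(\E(f_\rho)+2\|f_\rho\|_\rho^2+C_1)$ contribution to $C_2$, once one notes that $(\eta T)^{2p+2}\sigma^{-4p}$ can be dominated by $(\eta T)^{2p+3}\sigma^{-4p}$. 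Collecting the three estimates yields the claimed inequality $\bE_{Z^{t-1}}\|f_t\|_K^2 \le C_2\bigl(1+(\eta T)^{2p+3}\sigma^{-4p}\bigr)$, uniformly in $1\le t\le T$.
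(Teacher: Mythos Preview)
Your proposal is correct and follows essentially the same route as the paper. The paper does not invoke Proposition~\ref{prop: error decomposition in HK} and then modify it; instead it writes the decomposition \eqref{error decomposition} directly and, exactly as you propose, bounds the $\B_i$-contribution via $\bE_{z_i}\|\B_i\|_K^2\le (\kappa W_+'(0))^2\E(f_i)$ together with $\|(I-\eta W_+'(0)L_K)^{t-i}\|\le 1$, then feeds in Propositions~\ref{prop: bound for Esigma} and~\ref{prop: bound for E(ft)} and finishes with the same triangle step $\|f_{t+1}\|_K^2\le 2\|f_{t+1}-f_\rho\|_K^2+2\|f_\rho\|_K^2$. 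Your identification of the recursion obstacle and its resolution is precisely the paper's argument.
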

\begin{proof} Recall that $\mathcal{B}_i= W_+'(0)\left((y_i-f_i(x_i))K_{x_i}-L_K(f_\rho-f_i)\right)$ for $1 \leq i \leq t$. Then $\bE_{z_i|Z^{i-1}}\left[\mathcal{B}_i\right]=0$ and
\begin{align*}
\bE_{z_i|Z^{i-1}}\left[\|\mathcal{B}_i\|_K^2\right]&\le (W_+'(0))^2\bE_{z_i|Z^{i-1}}\left[\|(y_i-f_i(x_i))K_{x_i}\|_K^2\right]\\
&=(W_+'(0))^2\bE_{z_i|Z^{i-1}}\left[(y_i-f_i(x_i))^2K(x_i,x_i)\right]\le (\kappa W_+'(0))^2\E(f_i),
 \end{align*}
where we use the fact that $f_i$ is a random variable independent of $z_i$.  Combining with the definition of $f_t$ given by (\ref{algorithm}), we have
\begin{align*}
\bE_{Z^t}\left[\|f_{t+1}-f_\rho\|_K^2\right]&\le 2\left\|(I-\eta W_+'(0) L_K)^{t}f_\rho\right\|_K^2+2\eta^2\sum_{i=1}^t\bE_{Z^i}\left[\left\|(I-\eta W_+'(0)L_K)^{t-i}\B_i\right\|_K^2\right]\\
&\quad +2\eta^2\bE_{Z^t}\left[\left\|\sum_{i=1}^t(I-\eta W_+'(0)L_K)^{t-i}E_{\sigma,i}\right\|_K^2\right]\\
&\le 2\|f_\rho\|_K^2+2\eta^2\sum_{i=1}^t\left\|(I-\eta W_+'(0)L_K)^{t-i}\right\|\bE_{Z^i}\left[\left\|\B_i\right\|_K^2\right]\\
&\quad +2\eta^2\bE_{Z^t}\left[\left\|\sum_{i=1}^t(I-\eta W_+'(0)L_K)^{t-i}E_{\sigma,i}\right\|_K^2\right]\\
&\le 2\|f_\rho\|_K^2+2\eta^2(\kappa W_+'(0))^2\sum_{i=1}^t\bE_{Z^{i-1}}\left[\E(f_i)\right]
+2\eta^2\bE_{Z^{t}}\left[\sum_{i=1}^t\left\|E_{\sigma,i}\right\|_K\right]^2.
\end{align*}
Then putting the bounds (\ref{bound for E(ft)}) and (\ref{bound for Esigma}) for $\bE_{Z^{i-1}}\left[\E(f_i)\right]$ and $\left\|E_{\sigma,i}\right\|_K$ back into the above inequality yields
\begin{align*}
\bE_{Z^t}\left[\|f_{t+1}-f_\rho\|_K^2\right]
&\le 2\|f_\rho\|_K^2+ 2\eta^2T(\kappa W_+'(0))^2\left(2\E(f_\rho)+ 4\|f_\rho\|_\rho^2+2C_1(\eta T)^{2p+2}\sigma^{-4p}\right) \\
&\quad +2\kappa^2 c_p^2\left(M+ \kappa M\sqrt{C_W }\right)^{4p+2} (\eta T)^{2p+3}{\sigma^{-4p}}.
\end{align*} Finally we obtain the desired bound \eqref{expectation bound for ft} by the relation
$$\bE_{Z^t}\left[\|f_{t+1}\|_K^2\right]\le 2\bE_{Z^t}\left[\|f_{t+1}-f_\rho\|_K^2\right]+ 2\|f_\rho\|_K^2.$$
This completes the proof.
\end{proof}
Now we are in a position to prove the convergence rates in $\H_K.$

{\noindent {\bf Proof of Theorem \ref{thm: convergence rate in HK}}:}
Similar as the proof of Theorem \ref{thm: convergence rate in L2}, due to  Proposition \ref{prop: error decomposition in HK}, we need to estimate the three terms appeared in the upper bound of $\bE_{Z^T}\left[\|f_{T+1}-f_\rho\|_K^2\right]$ respectively.

For the first term, since the target function $f_\rho$ satisfies the regularity condition (\ref{regularity condition}) with $r> \frac12$, i.e., $f_\rho=L_K^{r}g_\rho$ with $g_\rho\in L_{\rho_X}^2$ and $r>\frac12,$ then by Lemma \ref{lem: key lemma} with $\alpha=r-\frac12$ and $s=T,$ we have
\begin{equation}\label{the first term in HK}
\begin{split}
&2\left\|(I-\eta W_+'(0) L_K)^{T}f_\rho\right\|_K^2\\
&=2\left\|(I-\eta W_+'(0) L_K)^{T}L_K^{r-\frac12} L_K^{\frac12}g_\rho\right\|_K^2\\
&\le 2\left\|(I-\eta W_+'(0) L_K)^{T}L_K^{r-\frac12}\right\|^2\left\|g_\rho\right\|_\rho^2\\
&\le 2\left( \frac{r-\frac12}{eW_+'(0)}\right)^{2(r-\frac12)}\left\| g_\rho\right\|_\rho^2(\eta T)^{2(r-\frac12)}).
\end{split}
\end{equation}

The property of trace shows that if $A$ is an operator of trace class and $B$ is a bounded linear operator, there holds ${\rm Tr}(AB)\le {\rm Tr}(A)\|B\|$. If the capacity condition (\ref{capacity condition}) holds with $0<\beta <1,$ then the third term in Proposition \ref{prop: error decomposition in HK} can be bounded as
\begin{equation}\label{e1}
\begin{split}
&4\eta^2(W_+'(0))^2\sum_{t=1}^T\left(\kappa^2\bE_{Z^{t-1}}[\|f_t\|_K^2]+M^2\right){\rm Tr}\left(L_K(I-\eta W_+'(0) L_K)^{2(T-t)}\right)\\
&\leq 2\eta^2(W_+'(0))^2\sum_{t=1}^T\left(2\kappa^2\bE_{Z^{t-1}}[\|f_t\|_K^2]+2M^2\right){\rm Tr}\left(L_K(I-\eta W_+'(0) L_K)^{2(T-t)}\right)\\
&\leq \eta^2\sum_{t=1}^T\left\|L_K^{1-\beta}(I-\eta W_+'(0) L_K)^{2(T-t)}\right\| \\
&\quad \quad \quad\times 2(W_+'(0))^2 {\rm Tr}(L_K^\beta)\left(2\kappa^2\max_{1\le t\le T}\bE_{Z^{t-1}}[\|f_t\|_K^2]+2M^2\right).
\end{split}
\end{equation}
Now we turn to estimate $\eta^2\sum_{t=1}^T\left\|L_K^{1-\beta}(I-\eta W_+'(0) L_K)^{2(T-t)}\right\|$ appeared in the bound above.  One can get from Lemma \ref{lem: key lemma} with $\alpha=1-\beta$ and $s=2t$ that
\begin{align*}
&\eta^2 \sum_{t=1}^T\left\|L_K^{1-\beta}(I-\eta W_+'(0) L_K)^{2(T-t)}\right\|\\
&=\eta^2\|L_K^{1-\beta}\|+\eta^2\sum_{t=1}^{T-1}\left\|L_K^{1-\beta}(I-\eta W_+'(0) L_K)^{2t}\right\|\\
&\le \eta^2\kappa^{2-2\beta}+\eta^2 \left(\frac{1-\beta}{eW_+'(0)}\right)^{1-\beta}\sum_{t=1}^{T-1} \frac{1}{(2\eta t)^{1-\beta}}\\
&\le \eta^2\kappa^{2-2\beta}+\left(\frac{1-\beta}{2eW_+'(0)}\right)^{1-\beta} \frac{1}{\beta} \eta^{1+\beta} T^{\beta}\\
&\le \left( \kappa^{2-2\beta}+\left(\frac{1-\beta}{eW_+'(0)}\right)^{1-\beta}\frac{1}{\beta}\right)\eta^{1+\beta}T^{\beta},
\end{align*}
Combining the bound \eqref{e1} with the bound (\ref{expectation bound for ft}) for $\bE_{Z^{t-1}}\left[\|f_t\|_K^2\right]$, the third term can be bounded as
\begin{equation}\label{the second term in HK}
\begin{split}
&4\eta^2(W_+'(0))^2\sum_{t=1}^T\left(\kappa^2\bE_{Z^{t-1}}[\|f_t\|_K^2]+M^2\right){\rm Tr}\left(L_K(I-\eta W_+'(0) L_K)^{2(T-t)}\right)\\
&\le \left( \kappa^{2-2\beta}+\left(\frac{1-\beta}{eW_+'(0)}\right)^{1-\beta}\frac{1}{\beta}\right) \eta^{1+\beta}T^{\beta}\\
&\quad \quad \quad \times 2(W_+'(0))^2 {\rm Tr}(L_K^\beta)\left(\kappa^2C_4(1+(\eta T)^{2p+3}{\sigma^{-4p}})+M^2\right).
\end{split}
\end{equation}

For the second term $2\eta^2\bE\left[\left\|\sum_{t=1}^T(I-\eta W_+'(0)L_K)^{T-t}E_{\sigma,t}\right\|_K^2\right]$, by the uniform bound (\ref{bound for Esigma}) for $\left\|E_{\sigma,t}\right\|_K$, we have

\begin{equation}\label{the third term in HK}
\begin{split}
&2\eta^2\left\|\sum_{t=1}^T(I-\eta W_+'(0)L_K)^{T-t}E_{\sigma,t}\right\|_K^2\\
&\le 2\eta^2\left(\sum_{t=1}^T\left\|(I-\eta W_+'(0)L_K)^{T-t}E_{\sigma,t}\right\|_K\right)^2 \\
&\le 2\eta^2\left(\sum_{t=1}^T \left\|E_{\sigma,t}\right\|_K\right)^2\le 2\kappa^2 c_p^2\left(M+ \kappa M\sqrt{C_W }\right)^{4p+2} \frac{(\eta T)^{2p+3}}{\sigma^{4p}}.
\end{split}
\end{equation}

Now putting the above estimates (\ref{the first term in HK}), (\ref{the second term in HK}) and (\ref{the third term in HK}) back into Proposition \ref{prop: error decomposition in HK} yields the bound for $\bE_{Z^T}\left[\|f_{T+1}-f_\rho\|_K^2\right]$, which is given by
\begin{align*}
&2\left( \frac{r-\frac12}{eW_+'(0)}\right)^{2(r-\frac12)}\left\| g_\rho\right\|_\rho^2(\eta T)^{2(r-\frac12)})+2\kappa^2 c_p^2\left(M+ \kappa M\sqrt{C_W }\right)^{4p+2} \frac{(\eta T)^{2p+3}}{\sigma^{4p}}\\
&+2(W_+'(0))^2 {\rm Tr}(L_K^\beta)\left(\kappa^2C_4(1+(\eta T)^{2p+3}{\sigma^{-4p}})+M^2\right) \left( \kappa^{2-2\beta}+\left(\frac{1-\beta}{eW_+'(0)}\right)^{1-\beta}\frac{1}{\beta}\right)\eta^{1+\beta}T^{\beta}.
\end{align*}
Finally we choose $\eta=\frac{1}{\eta_0}T^{\frac{1-2r-\beta}{2r+\beta}}$ in the bound above to obtain the desired result with
\begin{align*}
\tilde{C}&=2\left( \frac{r-\frac12}{eW_+'(0)}\right)^{2r-1}\left\| g_\rho\right\|_\rho^2\eta_0^{2r-1}+2\kappa^2 c_p^2\left(M+ \kappa M\sqrt{C_W }\right)^{4p+2} \eta_0^{-(2p+3)}\\
&+2(W_+'(0))^2 {\rm Tr}(L_K^\beta)\left(\kappa^2C_4(1+\eta_0^{-(2p+3)})+M^2\right) \left( \kappa^{2-2\beta}+\left(\frac{1-\beta}{eW_+'(0)}\right)^{1-\beta}\frac{1}{\beta}\right) \eta_0^{-(1+\beta)}.
\end{align*}
The proof is finished. \qed

\section*{Acknowledgments}
The work of Zheng-Chu Guo is supported by Zhejiang Provincial Natural Science Foundation of China [Project No. LR20A010001], National Natural Science Foundation of China [Project Nos. U21A20426 and 12271473], and Fundamental Research Funds for the Central Universities [Project No. 2021XZZX001]. The work of Andreas Christmann is partially supported by German Science Foundation (DFG) under Grant CH 291/3-1. The work of Lei Shi is supported by the National Natural Science Foundation of China [Project Nos.12171039 and 12061160462] and Shanghai Science and Technology Program [Project Nos. 21JC1400600 and 20JC1412700].

\end{document}